\documentclass[conference]{IEEEtran}
\usepackage{times}

\usepackage[numbers]{natbib}
\usepackage{multicol}
\usepackage{amsmath}
\usepackage{amssymb}
\makeatletter
\@ifundefined{labelindent}{}{\let\labelindent\@undefined}
\makeatother
\usepackage{enumitem}
\usepackage{listings}
\usepackage{xcolor}
\usepackage{booktabs}
\usepackage[labelsep=colon, singlelinecheck=true, font={small}]{caption}
\usepackage[scale=0.9]{FiraMono}
\usepackage{svg}
\usepackage{amsthm}

\newtheorem{thm}{Theorem}
\newtheorem{prop}{Proposition}
\newtheorem{defn}{Definition}

\definecolor{codegreen}{HTML}{00997B} % Teal Green for strings
\definecolor{codegray}{rgb}{0.5,0.5,0.5}
\definecolor{codepurple}{rgb}{0.58,0,0.82}
\definecolor{backcolour}{HTML}{F0F0F0} % Slightly darker gray for better contrast
\definecolor{quartoblue}{HTML}{003B4F}
\definecolor{quartoteal}{HTML}{00769E} % Teal for decorators/types (was pink)
\definecolor{quartocomment}{HTML}{6A737D}
\definecolor{retrieverlink}{HTML}{1A5FB4} % Warm rust for refs
\definecolor{retrievercite}{HTML}{1A5FB4} % Blue-slate for citations
\definecolor{retrieverurl}{HTML}{1A5FB4} % Muted teal for URLs

\lstdefinestyle{retriever}{
    backgroundcolor=\color{backcolour},   
    commentstyle=\color{quartocomment}\itshape,
    keywordstyle=\color{quartoblue}\bfseries,
    keywordstyle=[2]\color{quartoteal}\bfseries, % Decorators and special types in Teal
    numberstyle=\tiny\color{codegray},
    stringstyle=\color{codegreen},
    basicstyle=\ttfamily\footnotesize,
    breakatwhitespace=false,         
    breaklines=true,                 
    captionpos=b,                    
    keepspaces=true,                 
    numbers=left,                    
    numbersep=10pt,                  
    showspaces=false,                
    showstringspaces=false,
    showtabs=false,                  
    tabsize=4,
    frame=lines, % Use lines or none, but need expansion
    rulecolor=\color{backcolour}, % Hide frame
    xleftmargin=20pt,
    xrightmargin=5pt,
    framexleftmargin=20pt,
    framexrightmargin=5pt,
    framextopmargin=5pt,
    framexbottommargin=5pt,
    aboveskip=1em,
    belowskip=1em,
    morekeywords={with, as, def, return, lambda, class, import, from},
    morekeywords=[2]{@Rate, @Hybrid, @Trigger, Pipeline, CameraFlow, BeliefFlow, GoalFlow, MonitorFlow, PlanFlow, SkillFlow, ControllerFlow, TeleopFlow, CameraSource, BeliefMemoryFlow, LanguageGoalFlow, ExecutionMonitorFlow, VLMPlanFlow, VLAFlow, IKControllerFlow, Window, Latest, Trigger},
    literate={_}{\_}{1}
}
\usepackage[T1]{fontenc}
\usepackage[bookmarks=true]{hyperref}
\usepackage{xspace}
\usepackage{changepage} % narrow appendix measure
\usepackage[disable]{todonotes}
\let\oldtodo\todo
\newcommand{\todoP}[2]{%
    \ifcase#1\oldtodo[color=red!15]{P0: #2}%
    \or\oldtodo[color=orange!20]{P1: #2}%
    \or\oldtodo[color=purple!15]{P2: #2}%
    \else\oldtodo[color=orange!20]{P#1: #2}%
    \fi
}
\newcommand{\todoInlineP}[2]{%
    \ifcase#1\oldtodo[color=red!15, inline]{P0: #2}%
    \or\oldtodo[color=orange!20, inline]{P1: #2}%
    \or\oldtodo[color=purple!15, inline]{P2: #2}%
    \else\oldtodo[color=orange!20, inline]{P#1: #2}%
    \fi
}
\renewcommand{\todo}[1]{\todoP1{#1}}

\newcommand{\asyncmodel}{asynchronous environment--agent loop\xspace}

\newif\ifshowtodolist
\showtodolistfalse
\newcommand{\maybeTodoList}{%
    \ifshowtodolist
        \clearpage
        \section*{Todo List}
        \listoftodos
        \clearpage
    \fi
}

\hypersetup{
    pdftitle={Retriever: Composing the Perception-Reasoning-Action Loop for Long-Horizon Manipulation},
    pdfauthor={Linfeng Zhao, Haojie Huang, Jiayuan Mao, Weiyu Liu, Lawson L.S. Wong, Mykel J. Kochenderfer},
    pdfkeywords={Robots, Programming Model},
    pdfsubject={Robots},
    colorlinks=true,
    linkcolor=retrieverlink,
    citecolor=retrievercite,
    urlcolor=retrieverurl,
    pdfborder={0 0 0}
}

\title{Retriever: Composing the Perception-Reasoning-Action Loop for Long-Horizon Manipulation}

\author{
{Linfeng Zhao}{${^\P}$},
{Haojie Huang}{${^\ddagger}$},
{Jiayuan Mao}{${^\S}$},
{Weiyu Liu}{${^\P}$},
{Lawson L.S. Wong}{${^{\star\ddagger}}$},
{Mykel J. Kochenderfer}{${^{\star\P}}$}
\\
\vspace{0.2cm}
{${~^\P}$}Stanford University,
{${~^\S}$}MIT,
{${~^\ddagger}$}Northeastern University,
{${~^\star}$}Equal Advising
}

\begin{document}

\maketitle
\pagestyle{plain}

\IEEEpeerreviewmaketitle

% !TEX root = ../main_arxiv.tex
\begin{abstract}
Building long-horizon robot agents requires composing closed-loop pipelines---perception, belief update, planning, and control---whose components run at different clocks and with variable latency.
Today, these systems are often assembled with ad-hoc concurrency and pub/sub conventions that make timing and input-consumption semantics implicit, yielding schedule-dependent behavior that is hard to reproduce, debug, and reuse.
Current solutions typically solve parts of this problem at either the algorithmic or the systems layer, but not both.
In this work, we propose Retriever, which spans the entire stack: an asynchronous decision model, a programming model, a runtime, and an example closed-loop agent pipeline.
Retriever represents an agent as a graph of stateful \textit{causal stream functions} executed on explicit run clocks.
We formalize this view via an asynchronous environment--agent loop over continuous-time streams and show that finite-memory causal policies can be represented by compositions of these operators.
Retriever compiles these graphs to multiple backends and provides a deterministic replay contract when consumed inputs, clock/commit traces, and stochastic outputs are logged.
We evaluate Retriever through a real-robot case study together with controlled studies of runtime overhead and deterministic replay behavior.
\end{abstract}
 % only abstract here

% !TEX root = ../main_arxiv.tex
\section{Introduction}
\label{sec:intro}

% Q1: What is the problem?
General robotic manipulation poses fundamental challenges. Tasks often require long-horizon decision making under partial observability while following open-ended human instructions in dynamic environments.
Training a single unified policy is often infeasible because of data or latency constraints.
Instead, effective agents use \emph{closed-loop, reactive, modular pipelines} (e.g., Figure~\ref{fig:case-study-pipeline}) in which specialized modules handle perception, reasoning, and control at different rates.
Even as data scaling gradually pushes the field toward more integrated systems, this modular stage remains inevitable.

However, today's stacks offer no general-purpose way to build such pipelines: each system is engineered as a brittle one-off, even though the same coordination problems recur across tasks and platforms.
If the pipeline blocks on a slow model call, control stalls and the robot stutters.
If modules run without coordination, they act on stale data, and behavior depends on thread scheduling in ways that are nearly impossible to debug or replay.
While deep learning thrives on modularity and compositionality (e.g., PyTorch layers), robotics does not yet have a widely adopted abstraction that makes the temporal composition of these components explicit and deterministic.
Such an abstraction is increasingly timely: general-purpose coding agents can already write and debug programs together with roboticists, making programs the key intermediate representation between foundation models and robots.

%%%%%%%%%%%%%%%%%%%%%%%%%%%%%%%%%%%%%%%%%%%%%%%%%%
% Don't change this part

\begin{figure}[t]
    \centering
    \begin{lstlisting}[style=retriever, language=Python]
from retriever import Pipeline, Rate, Trigger, Latest

# 1) Define Flows (what) + run clocks (when)
head_cam = CameraSource(id=0)     @Rate(hz=30)
wrist_cam = CameraSource(id=1)    @Rate(hz=30)
belief  = BeliefMemoryFlow()      @Trigger(
    "inspection_done")
monitor = ExecutionMonitorFlow()  @Trigger(
    "belief_updated", "progress")
planner = VLMPlanFlow("gemini") @Trigger("replan")
vla     = VLASkillFlow("pi05")    @Rate(hz=2)
robot   = ControllerFlow(id=0)    @Rate(hz=200)

# 2) Build the graph (composition + sync on edges)
pipe = Pipeline("Closed-loop Agent")
with pipe:  # Each line defines a chain
    wrist_cam.then(vla, sync=Latest())\
             .then(robot, sync=Latest())
    head_cam.then(belief, sync=Latest())\
            .then(planner, sync=Latest())\
            .then(monitor, sync=Latest())\
            .then(vla, sync=Latest())

# 3) Debug vs deploy
pipe.step(dt=0.1)  # Run for 0.1s in main process
pipe.run(backend="dora")  # Run async -> deploy
\end{lstlisting}
    \vspace{-0.5em}
    \caption{\textit{\texttt{Retriever-0} (representative pipeline).} Retriever is a programming model and runtime framework for closed-loop reactive agents. Agents are written by composing Flows into computational graphs (via \texttt{then}) and making time explicit: each Flow declares a run clock, and each edge declares a synchronization or buffering contract. Between Flows with mismatched rates, plan chunks and action chunks carry time-extended outputs, so slow planning overlaps skill execution and 2Hz inference keeps 200Hz control fed (Sec.~\ref{sec:case_study}). The graph can be stepped in the main process for debugging (\texttt{.step()}) or executed asynchronously (\texttt{.run()}). The listing is essentially runnable Python, simplified mainly in naming, and corresponds to the canonical pipeline in Fig.~\ref{fig:case-study-pipeline}.}
    \label{fig:teaser}
    \vspace{-1.8em}
\end{figure}
%%%%%%%%%%%%%%%%%%%%%%%%%%%%%%%%%%%%%%%%%%%%%%%%%%

%%%%%%%%%%%%%%%%%%%%%%%%%%%%%%%%%%%%%%%%%%%%%%%%%%
% \begin{figure*}[!t]
%     \centering
%     \textcolor{lightgray}{\rule{\linewidth}{6cm}}
%     \caption{\textbf{[TODO placeholder: multiple examples] The Flow Graph Diagram.}}
%     \label{fig:case-study-diagram}
% \end{figure*}
\begin{figure*}[t]
    \centering
    \includegraphics[width=0.9\linewidth]{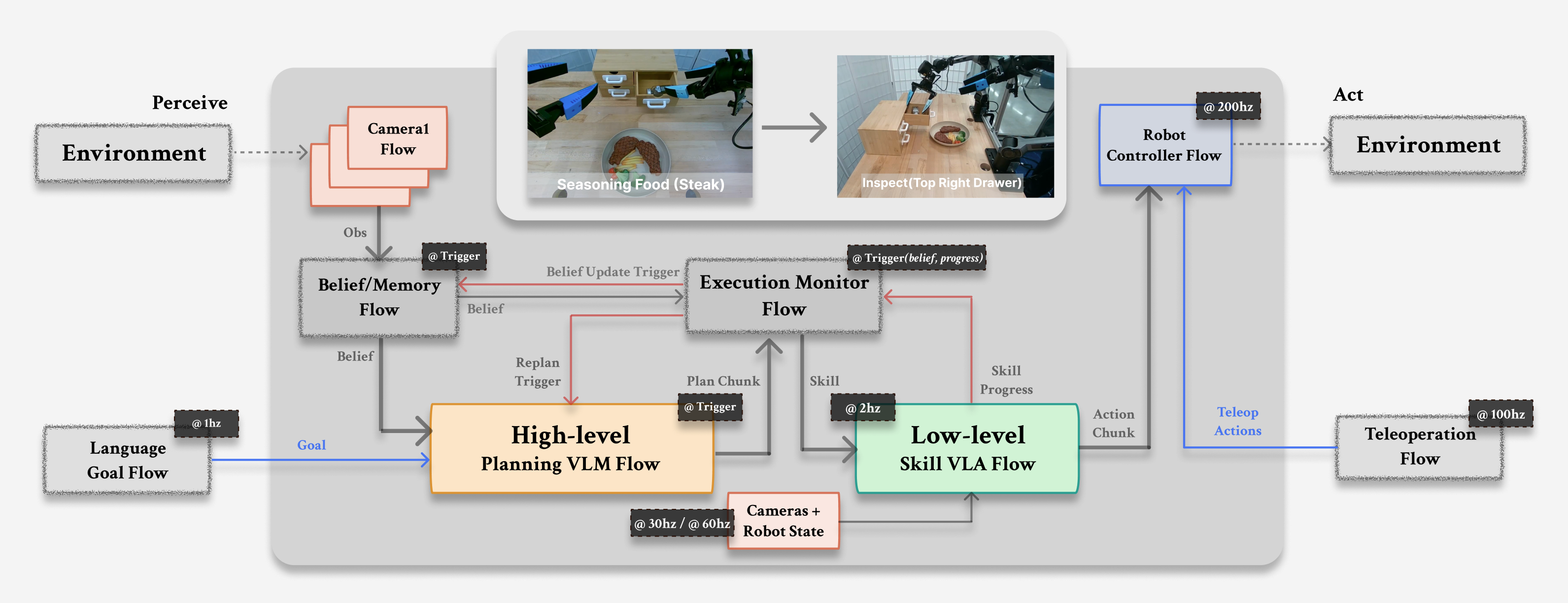}
    \caption{\textbf{System Pipeline.} \texttt{Retriever-0} composes camera, belief/memory, planning VLM, execution-monitor, VLA skill, and robot-controller Flows. Triggered belief and progress updates close the loop; \texttt{PlanChunk}s and \texttt{ActionChunk}s bridge slow planning, 2Hz skill inference, and 200Hz control.}
    \label{fig:case-study-pipeline}
    \vspace{-1.5em}
\end{figure*}

%%%%%%%%%%%%%%%%%%%%%%%%%%%%%%%%%%%%%%%%%%%%%%%%%%

% Q2: Why is it interesting and important?
Tightly coupling semantic reasoning with reactive skills is central to robust autonomy.
For instance, a tool-use task (Fig.~\ref{fig:case-study-pipeline}) may combine slow VLM planning, fast VLA skill execution, real-time failure monitoring, and still faster inverse-kinematics and control loops.
These modules must remain coordinated despite their different rates.\footnote{Language-goal and teleoperation Flows are optional input sources for task specification, recovery, and data collection.}
A programming model that makes these temporal couplings explicit would allow developers to safely compose modules at mismatched rates without relying on brittle, task-specific glue code.

% Q3: Why is it hard? (E.g., why do naive approaches fail?)
Real-world robotic systems operate \textbf{asynchronously and under strict timing constraints}, creating a fundamental mismatch with our abstractions.
The challenge is twofold.
(1) \textbf{Physics vs.\ Compute:} Large models (e.g., VLMs) have variable latency while physical control requires strict deadlines. Standard Markov decision process (MDP/POMDP) formulations abstract away execution time. In a naive implementation, blocking stalls execution while non-blocking access can yield stale decisions.
(2) \textbf{Determinism vs.\ Asynchrony:} Many middleware applications (e.g., ROS stacks) rely on implicit callback ordering. The resulting schedule dependence makes behavior difficult to replay or verify and makes differentiable optimization unreliable.

% Q4: Why hasn't it been solved before? (Or, what's wrong with previous proposed solutions? How does mine differ?)
% \noindent
A natural goal is to decouple module algorithms from asynchronous execution details.
Existing paradigms provide \textbf{limited support} for this in \textbf{closed-loop} settings (Table~\ref{tab:comparison}).
(1) \textbf{Mismatched Abstractions:} Learning formulations (e.g., RL and imitation learning over MDPs/POMDPs) assume a globally synchronized timestep, while systems tools (e.g., ROS) assume loose message passing, losing temporal semantics.
(2) \textbf{Missing Abstraction Boundary:} Middleware lacks explicit primitives for temporal coordination (buffering, chunking), leaving correctness to ad-hoc glue code.
In this paper, we introduce Retriever, \textbf{a new programming model and a flexible runtime} for asynchronous, compositional robot agents.
Its primitives make buffering, scheduling, and deadlines explicit without embedding composition logic inside individual modules.
The result is \emph{functional determinism}: under the fixed clocks, synchronization policies, and replay conditions formalized later, the same input trace yields the same output trace regardless of scheduler interleavings.
This supports reproducible replay, verification, and path-gradient studies over fixed temporal traces.

\begin{table*}[t]
    \centering
    \scriptsize
    \begin{tabular}{@{}l|llll@{}}
    \toprule
    \textbf{Framework} & \textbf{Abstraction} & \textbf{Time Model} & \textbf{Determinism} & \textbf{Primary Scope} \\
    \midrule
    Standard RL (Gym) & MDP/POMDP/Agent Loop & Synchronized Steps & Sequential & Algorithmic Learning \\
    \midrule
    Imperative & Call-Return / Loop & Implicit (Blocking) & Sequential & Prototyping \& Glue Code \\
    ROS / ROS 2 & Pub/Sub (Transport) & Implicit (Wall-clock) & Best-effort* & Async Robot System \\
    Process networks (KPN) & Process Graph + FIFOs & Logical Streams & Schedule-independent & Stream Semantics \\
    FRP (e.g. Yampa) & Signal Functions & Continuous / Logical & Functional Determinism & Functional Logic \\
    Actor model / Ray & Actors + Async Messages & Event-driven & Nondeterministic & Distributed Execution \\
    TF / PyTorch (ML) & Graph IR (+ runtime) & Discrete Steps & Functional Determinism & Gradient Optimization \\
    \midrule
    \textbf{Retriever} & {\textbf{Behavior-Level} Graph} & {\textbf{Explicit} (Clocks/Sync)} & {Functional Determinism} & {\textit{Closed-Loop} Agent Composition} \\
    \bottomrule
    \end{tabular}
    \caption{Comparison of Retriever with related paradigms. Retriever targets a systems-level abstraction that makes time and input-consumption semantics explicit for deterministic closed-loop agent composition; see Appendix~\ref{app:history} for a longer historical lineage, and Table~\ref{tab:design-goals} (Sec.~\ref{sec:framework}) for how each family covers Retriever's design desiderata. *Best-effort: not guaranteed schedule-independent; \emph{functional determinism}: outputs are functions of input histories, not scheduler interleavings (Sec.~\ref{sec:related_work}).}
    \label{tab:comparison}
    \vspace{-1.5em}
\end{table*}

% Q5: What are the key components of my approach and results? Also include any specific limitations.
Our contributions include:
(1) \textbf{Formulation in Physical Time:} We formulate asynchronous agent--environment interaction over continuous physical time, without the single global timestep assumed in MDPs and POMDPs (Sec.~\ref{sec:formulation}).
(2) \textbf{Programming Model:} We define a dataflow graph in which modules have explicit clocks and edges use \emph{synchronization policies} for deterministic input consumption.
(3) \textbf{Runtime Framework:} Our embedded Python DSL compiles to an intermediate representation that can be stepped locally for debugging and replay or run on supported backends such as Dora without replacing transport middleware (Sec.~\ref{sec:framework}).
(4) \textbf{Long-Horizon Manipulation Pipeline and Controlled Studies:} We build a closed-loop multi-rate pipeline for long-horizon manipulation that combines VLM planning, reactive skills, belief memory, and monitoring, together with controlled studies of runtime overhead and determinism (Sec.~\ref{sec:case_study}, \ref{sec:eval}).

% !TEX root = ../main_arxiv.tex
\section{Related Work}
\label{sec:related_work}

Retriever combines ideas from robotic systems, dataflow/stream processing, and functional programming, including the \textit{Unix philosophy} of simple, composable interfaces~\cite{Ritchie1974UNIX}.
Existing tools address connectivity, modularity, and reactivity from different layers.
It unifies these concerns under one timing model.
Table~\ref{tab:comparison} compares the underlying paradigms, and Table~\ref{tab:design-goals} (Sec.~\ref{sec:framework}) scores their coverage of the design desiderata.
% DONE (2026-05-06): citation/venue metadata validated and fixed.

\subsection{Open-World Robot Agents}
Open-world robot agents must solve long-horizon tasks under partial observability.
They must gather information, maintain belief and task progress, execute skills, monitor outcomes, and replan after failures.
Modular systems address these needs through STRIPS-style decoupling of reasoning and execution~\cite{Fikes1971STRIPS}, hierarchical task planning and skill libraries~\cite{Kaelbling2011HTN}, compositional skill models for task-and-motion planning~\cite{Wang2021Compositional}, and object-oriented world models~\cite{Zhao2022HOWM}.
Belief-space planning formalizes the same idea by aggregating history into an explicit belief state for planning-to-perceive~\cite{kaelbling2012unifying, garrett2020online, zhao2025seeing} and uncertainty handling~\cite{bonet2011, curtis2024partially}.
Language-model-driven instruction-following systems~\cite{Ahn2022SayCan} adopt a similar modular structure (estimation, deliberation, execution), but amplify timing challenges due to variable-latency inference.
% Our takeaway from this literature is that the \textit{modular pipeline skeleton} is stable, and it naturally induces multiple interacting closed loops (fast skill/control, slower belief/planning, and the agent--environment loop); what is missing is a \textit{systems-level} abstraction that makes temporal coordination between modules explicit and reproducible under asynchrony.
%
Interfaces such as \texttt{policy.forward($\cdot$)} or a single LLM call do not provide a programming model for coordinating closed-loop asynchronous modules in embodied environments.
The missing piece is explicit temporal semantics for module latency, multi-rate coordination, and delay handling.
% Our work aims to find a programming model that is \textit{expressive} enough to build a closed-loop hierarchical agent with high-level \textit{planning in \textit{belief space}} plus low-level policy execution (\citep{kaelbling2012unifying,zhao2025seeing}), where we find a primary missing piece is the consideration of temporal aspects: how long does it take to compute, how to coordinate between asynchronous modules, and how to handle wait/delay. 
%%%
% \todo{this could be compressed}
%%%
% Retriever represents the pipeline as a time-aware dataflow graph of Flows (stateful causal stream functions), where run clocks and edge synchronization policies are explicit and inspectable.

\subsection{Robotics Middleware and Frameworks}
Robotics systems are commonly built atop message-passing middleware such as ROS~\cite{Quigley2009ROS} and ROS 2~\cite{Macenski2022ROS2}.
These systems provide connectivity and deployment tooling but largely leave timing and data alignment to the application.
Callbacks, queues, and shared state therefore encode \textit{when} modules run and \textit{what} they consume, including alignment, bounded staleness, and multi-rate dependencies.
These hidden policies make runs hard to reproduce and input consumption hard to inspect.
A carefully engineered ROS 2 stack can realize many of these behaviors case by case, but as bespoke application code rather than reusable contracts.
More recent systems such as Dora~\cite{dora} provide execution and transport for robotics graphs.
Retriever sits above this layer and specifies graph timing, synchronization, and input-consumption contracts.
Real-time frameworks like OROCOS~\cite{Bruyninckx2001OROCOS} and Cyber RT~\cite{Baidu2019Cyber} strengthen guarantees for safety-critical loops, but do not provide a reusable abstraction for heterogeneous agent graphs spanning slow deliberation and fast control.
This motivates a programming layer that exposes timing and input-consumption semantics at the graph level (Sec.~\ref{sec:intro}--\ref{sec:framework}).

\subsection{Programming Models for Asynchronous systems}
A \textit{dataflow graph model} represents programs as graphs of stateful operators connected by streams.
Kahn process networks (\textit{KPN}) provide a key anchor: each node is a deterministic sequential process communicating only via FIFO channels, where reads block when inputs are unavailable~\cite{Kahn1974Semantics}.
Under these assumptions, the entire network is \textit{determinate}: for a fixed program and fixed input histories, the histories produced on all channels are uniquely determined and do not depend on the runtime schedule.
We refer to this property as \textit{functional determinism}---outputs are a function of input histories, not of thread interleavings.
Synchronous dataflow further restricts rates to enable static schedules and resource analysis~\cite{Lee1987SDF}.

In parallel, \textit{functional programming} and \textit{functional reactive programming (FRP)} emphasize explicit semantics for composing time-varying computation and feedback~\cite{Backus1978VonNeumann, Hughes1989WhyFP}.
\textit{Arrowized FRP} formalizes reactive systems as compositions of \textit{causal signal functions} with disciplined feedback~\cite{Nilsson2002FRP, Hughes2000Arrows}.
For embodied robotics, these models still leave two choices implicit: how a module aligns multi-rate event streams at decision time, and how that choice is inspected and replayed across runtimes.

The \textit{actor model} emphasizes isolated state with asynchronous message passing~\cite{Hewitt1973Actor, Armstrong2003Erlang}; modern runtimes such as Ray operationalize similar ideas for AI workloads~\cite{Moritz2018Ray}.
However, actors alone do not specify a unique input-consumption order across concurrent arrivals, so determinism typically requires additional constraints.
Finally, \textit{ML frameworks} such as TensorFlow and PyTorch use explicit graphs/IRs for portability and optimization~\cite{Abadi2016TensorFlow, Paszke2019PyTorch}, but largely assume synchronous tensor steps rather than asynchronous sensorimotor streams.

Retriever combines these ideas in an explicit agent graph whose input consumption is schedule-independent and anchored in physical time.
Appendix~\ref{app:history} provides a longer historical lineage.

% !TEX root = ../main_arxiv.tex
\section{Formulation: Asynchronous Environment--Agent Loop}
\label{sec:formulation}

% Future figure note: a side-by-side "pub/sub arrival-time race" vs.
% "explicit sync snapshot" diagram could strengthen this section if space permits.
% Earlier placeholder figure is recoverable from git history before this concision pass.

We first define time-varying values and the causal transformations between them.
We use the formal term \textit{causal stream function} (CSF) for the semantic object; the programming object introduced later is a \texttt{Flow}.
We use these definitions to lift the standard agent--environment loop to continuous time.
Observations, computation completions, and actions become streams over event time rather than values locked to one global step.
Sec.~\ref{sec:framework} turns this timing model into a programming model and runtime.
Table~\ref{tab:design-goals} states the design desiderata used to evaluate it.
Appendix~\ref{app:formulation_ext} and Appendix~\ref{app:formulation_related} provide additional definitions and timing discussion.

\subsection{The Data Types: Time and Streams}
We assume a global continuous time-tag set $\mathbb{T} = \mathbb{R}_{\ge 0}$ shared by all streams, at which sensors fire and actions take effect.

For a value type $\mathcal{V}$, we write $\mathsf{Stream}(\mathcal{V})$ for the set of time-tagged streams taking values in $\mathcal{V}$.
Following the tagged-signal view~\citep{Lee1998MoC,Liu2005TaggedSignals}, a \textbf{Stream} (or \textbf{Signal}) is a partial function $s:\mathbb{T}\rightharpoonup\mathcal{V}$.
Equivalently, it is a set of timestamped events $\{(t_i,v_i)\}$ with at most one value per time tag in a given stream.
We write events as time--value pairs $(t_i,v_i)$ throughout the paper.
We use $s(t)=\bot$ only as notation for ``no event at time $t$.''
There are two fundamental types of stream.
\begin{itemize}
    \item \textbf{Continuous streams (behaviors):} $s_c : \mathbb{T} \to \mathcal{V}_c$. Defined for all $t$ (e.g., physical properties, robot joint positions, velocity).
    \item \textbf{Discrete event streams:} $s_e:\mathbb{T}\rightharpoonup\mathcal{V}_e$ with countable, locally finite support (e.g., camera images, robot actions, messages, contact events).
\end{itemize}
This partial-function notation covers continuous behaviors and discrete events within one causal model.
Related signal/function views appear in FRP and synchronous-language semantics~\citep{Wan2000FRP,Nilsson2002FRP}.
For any stream $s$, we write $s|_{\le t}$ or $s|_{<t}$ for the restriction of the stream to tags up to, or strictly before, time $t$.

We define a \textbf{Clock} as a special Event Stream whose values are unitless ``ticks.''\footnote{Equivalently, we identify a clock with its locally finite support of trigger times $\mathrm{supp}(\mathcal{C})=\{t_k\}$.}
A clock $\mathcal{C}$ defines when a computational module runs, effectively discretizing continuous time for function execution.
Clocks can be periodic, event-driven, a union of both, or driven by external synchronization signals.
Event time remains continuous; clocks induce local decision epochs, so composition yields interleaved local steps rather than one global step sequence.

\subsection{\texorpdfstring{Causal Stream Functions (CSFs) and Flows ($\mathcal{F}$)}{Causal Stream Functions and Flows}}
The semantic unit of computation is a \textbf{causal stream function} (CSF) $\mathcal{F}: \mathsf{Stream}(\mathcal{I}) \to \mathsf{Stream}(\mathcal{O})$, which maps input streams of type $\mathcal{I}$ to output streams of type $\mathcal{O}$.\footnote{For a multi-input module with port value types $\mathcal{I}_1,\dots,\mathcal{I}_m$, $\mathsf{Stream}(\mathcal{I})$ is shorthand for the tuple of independent input streams $\prod_{j=1}^m \mathsf{Stream}(\mathcal{I}_j)$; the synchronized input at a tick is the port tuple $x_t\in\prod_{j=1}^m\mathcal{I}_j$.}
Let $s_{\mathrm{in}}$ and $s'_{\mathrm{in}}$ denote two input streams. Causality means that if they agree through time $t$, their output histories also agree through $t$:
\begin{equation}
    s_{\mathrm{in}}|_{\le t}=s'_{\mathrm{in}}|_{\le t}
    \;\Longrightarrow\;
    \mathcal{F}(s_{\mathrm{in}})|_{\le t}
    =\mathcal{F}(s'_{\mathrm{in}})|_{\le t}.
\end{equation}
This definition lets a pointwise map use an event at $t$.
Feedback cycles require the stricter prefix rule introduced below.

Two properties make CSFs modular.
First, CSFs are \textbf{stateful}: $\mathcal{F}$ can maintain internal memory $h_t$ (e.g., integrators, Kalman filters, belief states) that evolves over time.
Second, they satisfy \textbf{closure}: a graph of CSFs with strictly causal feedback, as specified below, is \textit{itself} a CSF.
This lets us build agent hierarchies from submodules without reasoning about one global state.
At an event time $t$, a stateful CSF can be written as a local transition
\begin{equation}
    (h_{t^+}, y_t) = f(h_{t^-}, x_t),
\end{equation}
where $h_{t^-}$ and $h_{t^+}$ are the local state immediately before and after processing the event, $x_t$ is the synchronized input snapshot consumed by the step, and $y_t \in \mathcal{O}$ is the emitted output value.
Closed-loop composition makes these graphs \emph{cyclic}: feedback between acting and sensing is the point, not an edge case.
Instantaneous cycles would impose algebraic constraints such as $x_t=f(x_t)$.
Therefore, every directed cycle must contain at least one delay or \textbf{strictly causal} edge whose consumer sees only producer events from times $<t$.\footnote{A state-delay node's output at tick $t_k$ depends only on state established before $t_k$. Acyclic same-tick dependencies are also admissible when their order is declared deterministically.}
Unlike a deep-learning graph that can avoid feedback or unroll recurrence offline, a robot agent must execute such feedback directly in time.

To execute a CSF, we bind it to a \textbf{Clock} and input-snapshot rules.
The resulting runtime object is a \textbf{Flow}: the CSF defines \emph{what} is computed, while the Flow defines \emph{when} it runs and which aligned snapshot it consumes.
\begin{equation}
    \texttt{Flow}(\mathcal{F}, \mathcal{C}) : \mathsf{Stream}(\mathcal{I}) \to \mathsf{Stream}(\mathcal{O})
\end{equation}
Here $\mathcal{I}$ and $\mathcal{O}$ are the input and output value spaces.
At the $k$-th sample tick $t_k \in \mathrm{supp}(\mathcal{C})$, the Flow evaluates its local transition.
Edgewise \texttt{sync} policies assemble the synchronized input snapshot $x_{t_k}$ (Sec.~\ref{sec:framework}):
\begin{equation}
    (h_{t_k^+}, y_{t_k}) = f(h_{t_k^-}, x_{t_k}).
\end{equation}
A Flow runs at $t_k$ and makes its result visible at a declared commit time $c_k$ (usually $c_k=t_k$), so periodic and event-driven modules share the same causal semantics.\footnote{Variable-latency policies may use $c_k>t_k$. Replay fixes or logs both $\{t_k\}$ and $\{c_k\}$, and downstream \texttt{sync} policies read only committed events.}

\subsection{Asynchronous Environment--Agent Loop}
\label{subsec:async_interaction_sync}

We extend the standard MDP/POMDP agent--environment loop by relaxing its single global timestep.
In the resulting \asyncmodel, observations and actions are causal streams over continuous event time.
The task objective is unchanged.
The model instead specifies which past values a module may consume when modules observe, compute, and act at different rates.
At the task level, the important split is:
\begin{enumerate}
    \item \textbf{Environment $\mathcal{E}$:} A causal map from action streams to observation streams. The environment evolves continuously, and observations may arrive at any time.
    \item \textbf{Agent Policy $\pi$:} A causal stream function mapping observation history to action streams. The agent does not wait for a ``next step''; it reacts asynchronously to events.
\end{enumerate}
At a local tick $t_k$, a \texttt{sync} policy selects an aligned snapshot or buffer from upstream events before $t_k$.
The runtime assembles these edge values into the module's ordered input record.
Sec.~\ref{sec:framework} implements this as \texttt{sync(...)} calls between clocked Flows.
Unlike step-based MDPs, the environment does not wait: while a module computes, physical state and other streams continue to evolve.
The informational constraint remains causal: an emitted action can depend only on observations committed to the histories sampled by its synchronization policies.

% Reference-only note: an earlier commented VLM planner Flow pseudocode block
% lived here during drafting. Keep concrete Flow code examples in Sec.~\ref{sec:framework}
% and the case-study appendix unless we decide this section needs another listing.
% The old block is recoverable from git history before this concision pass.

% \note{If space permits, we can upgrade this to a side-by-side ``arrival-time pub/sub'' vs. ``explicit sync'' comparison to make the source of nondeterminism visually explicit.}

%%%%%%%%%%%%%%%%%%%%%%%%%%%%%%%%%%%%%%%%%%%%%%%%%%

% !TEX root = ../main_arxiv.tex
\section{Retriever Programming Model \& Runtime}
\label{sec:framework}

This section turns the causal-stream-function (CSF) formulation into a programming model and runtime for multi-rate closed-loop systems (Fig.~\ref{fig:case-study-pipeline}).
Temporal coordination becomes an explicit graph contract rather than glue code.

%%%%%%%%%%%%%%%%%%%%%%%%%%%%%%%%

%%%%%%%%%%%%%%%%%%%%%%%%%%%%%%%%

\subsection{Design Desiderata}
We start from requirements induced by the canonical pipeline in Fig.~\ref{fig:case-study-pipeline}:
\begin{itemize}[leftmargin=*]
    \item \textbf{D1} \textit{Compositionality:} systems can be built by composing reusable modules that depend only on their inputs and local state, not global state and glue logic inside callbacks.
    \item \textbf{D2} \textit{Asynchronous multi-rate:} planners, skills, and controllers run in parallel at different rates, and model inference has variable latency; there is no single global synchronized tick.
    \item \textbf{D3} \textit{Closed-loop modularity:} agent pipelines need closed-loop feedback (monitoring $\rightarrow$ replanning $\rightarrow$ execution) without ad-hoc concurrency.
    \item \textbf{D4} \textit{Explicit time semantics:} slow$\rightarrow$fast and fast$\rightarrow$slow boundaries require principled handling of synchronization and buffering (e.g., staleness, chunking, and timeouts).
    \item \textbf{D5} \textit{Determinism, replay, and debuggability:} same inputs and initial internal state should yield identical traces/outputs under replay (independent of runtime scheduling), enabling data collection and debugging with breakpoints.
    \item \textbf{D6} \textit{Supported backend mapping:} decouple graph semantics from placement and transport choices where backend adapters exist, without treating hardware-specific behavior as part of the algorithm.
    % \item \textit{Debuggability \& data:} logs should answer ``what input snapshot did this output depend on?'' and support dataset extraction.
\end{itemize}
Table~\ref{tab:design-goals} scores how representative abstraction families cover these desiderata.

\begin{table*}[t]
    \centering
    \scriptsize
    \setlength{\tabcolsep}{3pt}
    \renewcommand{\arraystretch}{1.08}
    \begin{tabular}{@{}p{2.85cm}cccccc@{}}
    \toprule
    \textbf{Framework family} &
    \textbf{\begin{tabular}[c]{@{}c@{}}Async\\closed loop\\{\tiny(D2--D3)}\end{tabular}} &
    \textbf{\begin{tabular}[c]{@{}c@{}}Modular\\graph\\{\tiny(D1)}\end{tabular}} &
    \textbf{\begin{tabular}[c]{@{}c@{}}Explicit\\sync/stale\\{\tiny(D4)}\end{tabular}} &
    \textbf{\begin{tabular}[c]{@{}c@{}}Temporal\\skills/chunks\\{\tiny(D4)}\end{tabular}} &
    \textbf{\begin{tabular}[c]{@{}c@{}}Replayable\\step inputs\\{\tiny(D5)}\end{tabular}} &
    \textbf{\begin{tabular}[c]{@{}c@{}}Backend\\mapping\\{\tiny(D6)}\end{tabular}} \\
    \midrule
    MDP/POMDP loop~\cite{Puterman1994MDP} & Partial & No & No & Partial & Partial & No \\
    ROS 2/Dora~\cite{Macenski2022ROS2,dora} & Partial & Partial & Manual & Manual & Partial & Partial \\
    TAMP/options~\cite{kaelbling2012unifying,Sutton1999Options} & Partial & Partial & No & Yes & No & No \\
    KPN/FRP~\cite{Kahn1974Semantics,Nilsson2002FRP} & Partial & Yes & Partial & No & Yes & No \\
    Ray/actors~\cite{Hewitt1973Actor,Moritz2018Ray} & Partial & Yes & No & No & Partial & Yes \\
    PyTorch/TF IRs~\cite{Paszke2019PyTorch,Abadi2016TensorFlow} & No & Yes & No & No & Partial & Yes \\
    Retriever & Yes & Yes & Yes & Yes & Yes & Yes \\
    \bottomrule
    \end{tabular}
    \caption{Coverage of the design desiderata \textbf{D1--D6} across representative abstraction families; Table~\ref{tab:comparison} gives the corresponding paradigm taxonomy. ``Yes'' means the capability is first-class in the abstraction; ``Partial'' means the abstraction supports part of the need but leaves an important robot-agent contract unspecified; ``Manual'' means the behavior is implementable but encoded in application glue; ``No'' means it is outside the abstraction's target. Retriever's contribution is combining these capabilities in a single abstraction.}
    \label{tab:design-goals}
    \vspace{-1.0em}
\end{table*}

\subsection{Core Abstractions: Flows, Clocks, and Pipeline Graphs}
Retriever programs the \emph{agent computation graph}: users write \textbf{Flows} and connect them inside a \textbf{Pipeline}.
A Flow is a stateful synchronous step function (Fig.~\ref{fig:vla_flow_example}); asynchrony comes from its clock and graph edges. Programmatically, each Flow realizes a CSF from Sec.~\ref{sec:formulation} with three fields:
\begin{itemize}[leftmargin=*]
    \item \texttt{step()} --- the synchronous transition: update local state, emit outputs;
    \item \texttt{clock} --- \emph{when} the Flow steps: a periodic rate or an event trigger;
    \item input ports --- \emph{what} each step consumes: typed edges whose histories are materialized by \texttt{sync} policies.
\end{itemize}

\begin{figure}[t]
    \centering
\begin{lstlisting}[style=retriever, language=Python]
# Example: a VLA skill flow @2Hz
# (wrist cam obs, robot state, SkillCmd) -> ActionChunk
class VLASkillFlow(Flow):
    def reset(self):
        self.cur_skill = None

    def step(self, inp):
        cam_obs, robot_state, skill_cmd = inp  # aligned by sync policies
        self.cur_skill = skill_cmd
        return self.vla(cam_obs, robot_state, self.cur_skill)
\end{lstlisting}
    \caption{\textit{VLA Flow Example.} At each 2Hz \texttt{step()}, the skill Flow receives a synchronized snapshot: the latest 30Hz wrist-camera frame, 60Hz robot state, and active \texttt{SkillCmd}. It emits an \texttt{ActionChunk} so high-rate control can continue between policy calls (Fig.~\ref{fig:async-vs-sync-comparison}).}
    \label{fig:vla_flow_example}
    \vspace{-1.0em}
\end{figure}
These choices directly reflect the desiderata above: Flows provide a reusable, state-local unit of composition (\textbf{D1}); clocks make multi-rate execution explicit (\textbf{D2}); and edge policies make cross-rate interaction deterministic and replayable (\textbf{D4--D5}) while remaining portable across backends (\textbf{D6}). Closed-loop wiring (monitoring $\rightarrow$ replanning $\rightarrow$ execution) is expressed as an explicit graph pattern rather than hidden callback logic (\textbf{D3}).
At each tick, the runtime materializes the input snapshot and calls \texttt{step()} once. The returned output is then delivered downstream.
Every Flow owns exactly \textit{one} run clock, enforcing single-threaded sequential semantics per module.
Sec.~\ref{sec:case_study} shows how option-style skills and task primitives such as \texttt{OpenDrawer(handle)} become \texttt{SkillCmd} inputs, \texttt{ActionChunk} outputs, and progress/status streams.

\subsection{Synchronization Policies: The Deterministic Bridge}
\label{subsec:framework_sync}
CSFs define causal semantics; Retriever adds the runtime contract for clocked \texttt{step()} modules.
Before each \texttt{step()}, deterministic \texttt{sync} policies turn asynchronous upstream histories into one input snapshot (Fig.~\ref{fig:async-vs-sync-comparison}).
This makes input sampling an explicit timestamped edge contract rather than an implicit queue read (\textbf{D4--D5}).

\begin{figure}[t]
    \centering
    \includegraphics[width=\linewidth]{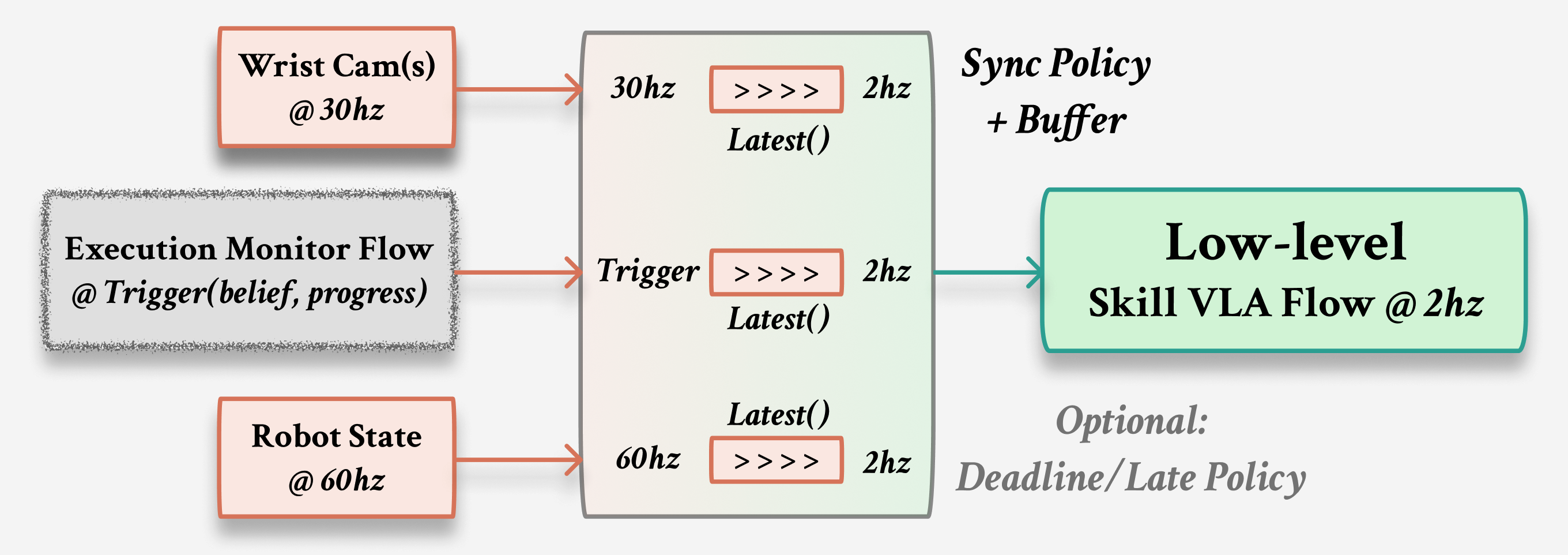}
    \caption{\textbf{Explicit synchronization for asynchronous modules.} A downstream 2Hz VLA \texttt{step()} consumes a synchronized snapshot assembled by deterministic policies such as \texttt{Latest()} from asynchronous streams (30Hz wrist camera, trigger events, 60Hz robot state). Output rates can differ from input clocks.}
    \label{fig:async-vs-sync-comparison}
    \vspace{-1.5em}
\end{figure}
Schematically, each tick follows the same contract:
\begin{lstlisting}[style=retriever, language=Python]
for tick in flow.clock:
    inp = {}
    for edge in flow.inputs:
        hist = edge.history(committed_before=tick)
        inp[edge.port] = sync(edge.policy, hist)
    out = flow.step(inp)
\end{lstlisting}
The returned \texttt{out} becomes visible downstream at the declared commit time.
The key contract is that \texttt{sync} is deterministic over committed events. By default, it reads only events committed before the Flow tick. Same-tick visibility requires a declared deterministic order.
For one incoming edge $u\!\to\!v$, the core operation is \texttt{inp[edge.port]=sync(...)}:
\begin{lstlisting}[style=retriever, language=Python, numbers=none]
visible = edge.history(committed_before=tick)
x_edge  = sync(edge.policy, visible)
\end{lstlisting}
The resulting \texttt{x\_edge} is one field in the ordered input record consumed by \texttt{step()}.\footnote{Formally, \texttt{x\_edge} is the per-edge value $x_{u\to v}(t_k^v)$, and the full snapshot $x_{t_k^v}=\big(x_{u\to v}(t_k^v)\big)_{u\in\mathrm{In}(v)}$ is the $x_{t_k}$ consumed by the local transition $f$ in Sec.~\ref{sec:formulation}. Each edge value may be a frame, held value, short buffer, aggregate, or missing/stale status.}

\noindent\textbf{A concrete \texttt{sync} step.}
Suppose an edge $u\!\to\!v$ carries the timestamped stream (time, value)
\begin{equation*}
    s_{u\to v} = \big[\,(0.10,\,1),\ (0.20,\,4),\ (0.35,\,9),\ (0.60,\,16)\,\big],
\end{equation*}
and $v$ ticks at $t_k^v = 0.50$. The strict prefix visible at this tick drops the future event at $0.60$:
\begin{equation*}
    s_{u\to v}\big|_{<0.50} = \big[\,(0.10,\,1),\ (0.20,\,4),\ (0.35,\,9)\,\big].
\end{equation*}
The \emph{same} history then yields different snapshots under different policies:
\begin{lstlisting}[style=retriever, language=Python, numbers=none]
visible = history.before(0.50)

sync(Latest(),     visible) -> 9
sync(Window(0.30), visible) -> [4, 9]
sync(Window(0.45), visible) -> [1, 4, 9]
\end{lstlisting}
The snapshot is a deterministic function of the committed history, not of queue order. With deterministic Flow code and \texttt{sync} policies, a fixed clock/commit trace therefore determines the network trace.\footnote{For fixed discrete event/synchronization paths and differentiable module dynamics, this also gives well-defined path gradients; Sec.~\ref{sec:eval} uses this property in a differentiable simulation study.} Sec.~\ref{sec:case_study} wires these policies onto the robot pipeline.

%%%%%%%%%%%%%%%%%%%%%%%%%%%%%%%%%%%%%%%%%%%%%%%%%%
%%%%%%%%%%%%%%%%%%%%%%%%%%%%%%%%%%%%%%%%%%%%%%%%%%

Common edge input-materialization policies include \texttt{Latest}, \texttt{Window}, and chunk-style buffering or playback; \texttt{Join} assembles the sampled edge values into the input record consumed by \texttt{step()}.
Table~\ref{tab:operators} summarizes the core computation, synchronization, and record-assembly operators.
\begin{table}[t]
    \centering
    \scriptsize
    \setlength{\tabcolsep}{3pt}
    \renewcommand{\arraystretch}{1.08}
    \begin{tabular}{@{}p{1.45cm}p{1.35cm}p{4.8cm}@{}}
    \toprule
    \textbf{Operator} & \textbf{Role} & \textbf{Contract / Typical Use} \\
    \midrule
    \texttt{Map} & Stateless & Pointwise transformation of one snapshot into another. \\
    \texttt{Scan} & Stateful & Folds input history into local state, e.g., belief or monitor state. \\
    \texttt{Latest} & Sync & Sample-and-hold at the downstream tick, optionally with bounded staleness. \\
    \texttt{Window} & Sync & Returns a bounded temporal buffer, e.g., last $N$ frames. \\
    \texttt{Join} & Bundle & Bundles synchronized edge values into one downstream input record. \\
    \bottomrule
    \end{tabular}
    \caption{\textbf{Core Retriever operators.} \texttt{Map} and \texttt{Scan} define causal computation within a module; \texttt{Latest} and \texttt{Window} sample asynchronous edge histories at module boundaries; \texttt{Join} assembles multi-input records.}
    \label{tab:operators}
    \vspace{-1.2em}
\end{table}
Temporal chunking patterns such as action-chunk playback are built from these primitives as buffering policies or small Flows.
Making the rules explicit replaces ``whatever message happens to be in the queue'' with a deterministic snapshot function of history (Sec.~\ref{sec:formulation}).

\textbf{Lag \& deadline semantics.}
When compute cannot keep up, Retriever makes the fallback explicit (\textbf{D4}).
A policy can hold the last valid value for a bounded duration, mark an input missing or expired, trigger a safe fallback, or raise an error in debug mode.
A blocking contract can instead require a fresh unread event before a step proceeds.
Rate clocks also declare whether lagging ticks are dropped, replayed to catch up, warned, or treated as errors.
Because replay depends on the resulting tick/commit trace, production logs must preserve lag decisions rather than reconstruct them from wall time.
We expand practical contracts and examples in Appendix~\ref{app:framework_ext}.

%%%%%%%%%%%%%%%%%%%%%%%%%%%%%%%%%%%%%%%%%%%%%%%%%%%%%%%%%%%
% (drawer demonstration moved to Fig.~\ref{fig:task-demos} in Sec. 6)
%%%%%%%%%%%%%%%%%%%%%%%%%%%%%%%%%%%%%%%%%%%%%%%%%%%%%%%%%%%

\subsection{Representation Power}
\label{subsec:framework_rep}

Why choose a temporal dataflow graph?
The agent in Fig.~\ref{fig:case-study-pipeline} combines high-rate control, medium-rate skills, and slow belief-space planning~\citep{kaelbling2012unifying,zhao2025seeing}.
We show that temporal dataflow graphs of CSFs are \textbf{expressive} enough for policies that act at discrete decision times, keep bounded memory, and sample asynchronous inputs deterministically.
Such policies need only pointwise maps, stateful scans, deterministic edge sampling, and record assembly (Table~\ref{tab:operators}).
These roles match standard primitives in Functional Reactive Programming (FRP)~\cite{Elliott1997Fran, Wan2000FRP} and stream processing~\cite{Abelson1996SICP}, giving the sufficient basis formalized below.

\begin{thm}[Representability]
\label{thm:representability_main}
Any deterministic policy in the \asyncmodel with locally finite decision times, fixed-size memory, finite aligned inputs from committed observation histories via deterministic \texttt{sync} policies, and strictly causal feedback can be implemented by a Retriever graph with the same action stream for every observation history.
\end{thm}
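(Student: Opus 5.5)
The plan is a direct construction. First, the hypotheses are unpacked into a normal form for the policy $\pi$. Locally finite decision times give a clock $\mathcal{C}_\pi$ with support $\{t_k\}$. Fixed-size memory gives a state space $\mathcal{H}$ and an initial state $h_0$. Finite aligned inputs via deterministic \texttt{sync} policies mean the following: at each $t_k$, the policy's information about the observation history is exactly the tuple $x_{t_k}=\big(\sigma_j(o_j|_{<t_k})\big)_{j=1}^m$ for finitely many observation ports $o_j$ and deterministic sync maps $\sigma_j$. So $\pi$ can be written as a transition $(h_{t_k^+},a_{t_k})=f(h_{t_k^-},x_{t_k})$, with the action committed at some declared $c_k\ge t_k$.

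The argument then rests on a single lemma: any such policy is \emph{equivalent} to this recursion. The action stream is determined by $h_0$, the tick/commit trace, and the snapshot sequence. The lemma is proved by induction on $k$, using local finiteness so that every $t$ is preceded by finitely many ticks.

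The Retriever graph is then built from the operators in Table~\ref{tab:operators}. For each observation port $j$ there is one edge into a Flow $v_\pi$, carrying the sync policy $\sigma_j$ (\texttt{Latest}, \texttt{Window}, or a general deterministic policy). A \texttt{Join} assembles these edge values into the record $x_{t_k}$. The Flow $v_\pi$ is a \texttt{Scan} with state $h$ and step function $f$; if the action must be a projection of the state, a trailing \texttt{Map} performs that projection. The Flow is bound to clock $\mathcal{C}_\pi$ and commit times $c_k$.

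Correctness follows by induction over $k$, using the per-tick contract of Sec.~\ref{subsec:framework_sync}. The runtime reads the committed history before $t_k$, applies the same $\sigma_j$, and calls \texttt{step()} once. The graph's state and output at tick $k$ therefore coincide with those of $\pi$, given identical prefixes, which is the inductive hypothesis. Agreement at every tick, together with identical commit times, gives identical action streams for every observation history.

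The closed loop with $\mathcal{E}$ must also be well defined. Here the assumption of strictly causal feedback is used: every cycle through the environment contains an edge that exposes only events at times $<t$. So at each tick the snapshot depends only on already-determined prefixes. Because the ticks are locally finite, the fixed point is constructed tick by tick, as in Kahn-style determinacy, and no algebraic loop $x_t=f(x_t)$ arises.

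I expect the main obstacle to be the normalization step. The issue is the claim that a policy whose dependence on observation history factors through finitely many deterministic sync maps and a fixed-size memory really is a Scan over snapshots. If "fixed-size memory" is read abstractly, the policy might use history information not captured by the sync outputs. I would first fix the definition so that the policy's per-tick information is exactly the snapshot plus memory. If that definition is too strong, I would absorb any extra bounded history into a \texttt{Window} sync or into the Scan state.

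A secondary subtlety is that variable commit delays $c_k>t_k$ must be treated as part of the policy's specification and replicated by the Flow. Without that, the action streams match in values but not in time tags.
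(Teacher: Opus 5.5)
Your construction is correct and is essentially the paper's proof. The paper does the same thing: decision epochs become a clock, deterministic \texttt{sync} edges (assembled by \texttt{Join}) materialize the aligned record, a \texttt{Scan} Flow stores the fixed-size state and evaluates the transition, and an optional \texttt{Map} emits the action. Your normalization step is what the paper takes as given by writing $\pi$ directly as $(h_k,a_k)=g(h_{k-1},z_k)$. The paper also spells out the multi-rate case as several such loops on local clocks with product state $H_1\times\cdots\times H_n$ and strict edges on internal cycles. Your single-clock version instead uses the strict-causality hypothesis on the environment loop, which is adequate for the theorem as stated.
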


In programming terms, $\pi$ is a control loop: read inputs, update state, and emit an action.
The proof (Appendix~\ref{app:proof_representation}) makes this concrete.
A single-rate policy becomes a \texttt{Scan} node, optionally followed by a \texttt{Map}, under one \texttt{Clock}; a multi-rate policy is several such loops coupled through deterministic \texttt{sync} edges.
The decision-time and input-sampling conditions are not bookkeeping.
Changing the rate or sampling rule closes a different loop with the environment and therefore defines a different policy.
Clocks and \texttt{sync} policies are part of the policy, not mere scheduling details.
Thus, the Retriever graph model is sufficient for the bounded-memory, multi-rate policies targeted here.

\textbf{Stochasticity.}
Theorem~\ref{thm:representability_main} extends to stochastic policies and environments.
A policy that samples actions---an RL policy or a temperature-sampled VLM---can be written as a deterministic sampler given its input and explicit random bits: $a_k \sim \pi(\cdot \mid z_k)$ becomes $a_k=f(z_k,\epsilon_k)$. One extra stream of realized draws, seeds, or logged model outputs therefore recovers the theorem and supplies the replay contract below (Thm.~\ref{thm:functional_determinism_main}).
Environment randomness needs no extension, since every guarantee is a per-trace statement; Appendix~\ref{app:proof_stochastic} expands both points.

\subsection{Runtime: Graph IR to Actors}
Retriever separates graph definition from execution to support \textbf{D6} without losing \textbf{D4--D5}.
Appendix~\ref{app:framework_ext} provides additional detail.
\begin{enumerate}
    \item \textbf{Definition (Python):} Users define Flows and connect them into a graph. This code is declarative; no threads are spawned yet.
    \item \textbf{Compilation (IR):} The system compiles the Python objects into a \textit{Static Intermediate Representation (IR)}. The IR captures topology, clock definitions, and synchronization parameters.
    \item \textbf{Execution (Actors):} The IR is sent to a backend. Each \textbf{actor} typically runs one Flow with its own state; edges become message channels and deterministic adapters.
\end{enumerate}

\textbf{Backend Mapping.}
Once compiled, the same graph can run with supported executors (\textbf{D6}), including in-process stepping, multiprocessing, and Dora.
Retriever preserves the graph meaning, clocks, and input-consumption contracts.
ROS 2, Dora, and other messaging substrates still handle transport and device integration.

\textbf{Functional Determinism, Logging, and Replay.}
Retriever's replay contract can be stated as the following theorem.
\begin{thm}[Functional Determinism]
\label{thm:functional_determinism_main}
Fix deterministic Flow code and \texttt{sync} policies, initial states, committed external-input histories, and a clock/commit trace (including lag decisions), with strictly causal feedback.
If stochastic calls are replayed from logged seeds or outputs, the output history is unique and independent of runtime scheduling.
\end{thm}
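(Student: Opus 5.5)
The plan is a Kahn-style determinacy argument: an induction over the merged, totally ordered set of clock ticks, carried out on the fixed clock/commit trace. First, form the set of all events that can occur: the union of tick times $\{t_k^v\}$ and commit times $\{c_k^v\}$ over all Flows $v$. Each clock is locally finite and the graph is finite, so this union is locally finite. It can therefore be enumerated in increasing time order as $\tau_1<\tau_2<\cdots$, possibly with several ticks sharing one tag. Within a shared tag, fix the order using the declared deterministic same-tick order. The strict-causality assumption is what makes this order acyclic: every directed cycle contains an edge whose consumer sees only producer events committed strictly before its tick. Replace each stochastic call by its logged draw or output $\epsilon_k$, as in the Stochasticity paragraph. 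After that substitution, every local transition $(h_{t_k^+},y_{t_k})=f(h_{t_k^-},x_{t_k},\epsilon_k)$ is an ordinary deterministic function.

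Second, prove by induction on this enumeration the following statement: the committed history on every edge up to $\tau_n$, and every Flow's local state after its last tick at or before $\tau_n$, are uniquely determined by the fixed data. The base case is the initial states together with the given external-input histories. For the inductive step, take a Flow $v$ ticking at $\tau_n$. By the runtime contract, its snapshot $x_{t_k^v}$ is formed edge by edge as $\texttt{sync}(\text{policy},\,s_{u\to v}|_{<\tau_n})$, or with a declared same-tick predecessor's committed output. Each of those prefixes involves only events already fixed by the induction hypothesis. Since \texttt{sync} is deterministic, the snapshot is determined, and so is the pair $(h^+,y)$ by determinism of $f$. The output $y$ is then committed at the logged time $c_k^v\ge t_k^v$. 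Because the commit time is taken from the trace rather than from wall-clock completion, when the computation actually finishes has no effect on the result. The same holds for lag decisions such as drops or catch-up replays: they are part of the trace, so which ticks exist is fixed in advance and never depends on the schedule.

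Third, conclude schedule independence. Any legal runtime execution must satisfy two conditions: (i) it runs each Flow's ticks in that Flow's own order, which holds because each Flow has a single clock and sequential semantics; and (ii) \texttt{sync} reads only committed events. So every such execution computes, for each tick, the same function of the same inputs that the induction specifies. The actual interleaving across actors only changes when values are physically produced, never what is read. Uniqueness of the whole output history then follows because local finiteness guarantees that every finite time is reached after finitely many induction steps.

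The step I expect to be the main obstacle is justifying the well-founded order on same-tag events. Non-strict edges that form acyclic same-tick chains must be topologically sorted. I also need to show that a runtime cannot let a consumer observe a same-tick value from a producer that is not declared before it. I would handle this by defining visibility at tag $\tau$ as the committed prefix before $\tau$, plus outputs of declared predecessors in the same-tick DAG. I would then argue that the strict-causality hypothesis on every cycle makes this relation a DAG, so a topological order exists, and that the runtime contract for \texttt{committed\_before} enforces exactly this visibility.
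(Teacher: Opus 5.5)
Your proposal is correct and follows essentially the same route as the paper's proof. Both argue by induction over the causal order of the unrolled graph, refined by the declared same-tick order: each tick's snapshot is fixed because \texttt{sync} and the step map are deterministic functions of committed prefixes already determined, with stochastic calls treated as logged external inputs as in the paper's stochasticity remark. Your additions fill in details the paper leaves implicit rather than changing the argument: local finiteness to guarantee a well-ordered enumeration of ticks and commits, taking commit times and lag decisions from the trace, and the explicit bridge from the logical induction to arbitrary legal interleavings.
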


This contract matters because closed-loop nondeterminism compounds: a scheduling race can change one action, then future observations, and eventually the whole trace.
Explicit clocks and \texttt{sync} policies turn functional determinism into a logging and replay interface, following a key property of KPN (Sec.~\ref{sec:related_work}).\footnote{VLM/VLA modules may still be stochastic because of sampling temperature or API-level nondeterminism. Retriever treats such randomness as part of the input stream: deterministic replay requires logging each module's random seed or raw output so replay consumes the logged values rather than re-invoking the model.}
We formalize this as \emph{trace determinism}---the per-trace statement of functional determinism---and prove it in Appendix~\ref{app:proof_determinism}.
This enables deterministic replay and dataset extraction (\textbf{D5}).
% Because, Retriever supports deterministic replay and dataset extraction (\textbf{D5}); Appendix~\ref{app:framework_ext} provides a concrete trace/log view.
Sec.~\ref{sec:eval} studies this empirically in a differentiable hybrid physics example: Retriever fixes the realized discrete event path under replay, so the backward pass differentiates the same graph across runs rather than a schedule-dependent graph.
Determinism here concerns the discrete event path, not bitwise GPU numerics (Appendix~\ref{app:proof_stochastic}).
% \todoP{2}{Add (to appendix?) concrete log schema (event IDs + consumed input IDs) and a short replay example; connect to determinism experiment.}
% This transforms debugging from "chasing race conditions" to "inspecting a deterministic trace,".
% ensures that \textit{imitation learning} training data exactly matches the inputs seen during inference, resolving a common source of distribution shift in robotic learning.

% \textbf{Functional determinism.}
% \begin{itemize}
%     \item \textbf{Reproducibility:} inputs to each \texttt{step()} are uniquely determined by the data log, independent of thread interleaving.
%     \item \textbf{Identifiability:} training pairs $(z_t, a_t)$ are well-defined, enabling valid dataset extraction for learning.
% \end{itemize}

% !TEX root = ../main_arxiv.tex
\section{Case Study: Closed-loop Hierarchical Manipulation with Human-in-the-Loop}
\label{sec:case_study}

We implement a long-horizon tabletop manipulation pipeline that couples partial observability, slow planning, medium-rate skills, and fast control. We use \texttt{Retriever-0} to denote the representative pipeline pattern in Fig.~\ref{fig:teaser}, instantiated at full scale in Fig.~\ref{fig:case-study-pipeline}.

\textit{Case-study notation.}
We denote the main Flows in Fig.~\ref{fig:case-study-pipeline} as $F_{\texttt{cam}}$ (observation), $F_{\texttt{belief}}$ (belief/memory), $F_{\texttt{plan}}$ (planning VLM), $F_{\texttt{exe}}$ (execution monitor), $F_{\texttt{skill}}$ (skill policy), and $F_{\texttt{ctrl}}$ (controller).
We reserve \texttt{PlanChunk}, \texttt{SkillCmd}, \texttt{ActionChunk}, and \texttt{SkillProgress} for typed messages.\footnote{Lower-case ``plan chunking'' and ``action chunking'' refer to the corresponding runtime patterns rather than the message datatypes.}

\textit{Key technical ideas.}
The case study instantiates Retriever around three runtime ideas:
\begin{itemize}[leftmargin=*]
    \item \textbf{Temporal and plan chunking.} We generalize action chunking~\cite{zhao2023learning} to time-extended objects at multiple levels: \texttt{ActionChunk}s bridge VLA inference to high-rate control, while \texttt{PlanChunk}s let slow high-level planning overlap with asynchronous skill execution.
    \item \textbf{Belief-conditioned planning.} The belief Flow emits compact task facts over objects, containers, and task state, supporting short conditional skill programs such as \texttt{IF drawer contains the spice THEN pick it ELSE continue search}~\cite{zhao2025seeing}.
    \item \textbf{Progress-aware switching.} The skill layer includes a progress-prediction module whose output lets the monitor switch skills, trigger fallback, or replan at explicit handoff points.
\end{itemize}

\subsection{Flow Definition, I/O, and Run Clocks}
We implement the pipeline as six major Flows with explicit I/O contracts and run clocks.
Each Flow owns local state, declares when it wakes, consumes an aligned input record, and emits typed outputs.
The runtime handles synchronization and backend data passing between Flows.
\begin{itemize}[leftmargin=*]
    \item \textbf{$F_{\texttt{cam}}$ (Observation Flow)}: emits timestamped camera observations at 30Hz and robot state at 60Hz.
    \item \textbf{$F_{\texttt{belief}}$ (Belief/Memory Flow)}: maintains belief $b$ as local state and updates only after \texttt{inspect(a drawer)}, emitting compact three-valued task predicates such as \texttt{contains(top-left drawer, spice)} $\in \{\texttt{Yes}, \texttt{No}, \texttt{Unknown}\}$.\footnote{This three-valued predicate interface is supported by a belief-space planning theory/algorithm \cite{zhao2025seeing}.}
    \item \textbf{$F_{\texttt{plan}}$ (Planning VLM Flow)}: consumes goal + belief $b$ and produces a bounded-horizon \texttt{PlanChunk}, often a small conditional tree for partial observability. It runs asynchronously when receiving a \texttt{replan} trigger from $F_{\texttt{exe}}$.
    \item \textbf{$F_{\texttt{exe}}$ (Execution Monitor Flow)}: triggered by belief updates and skill-progress predictions; buffers \texttt{PlanChunk}s, emits \texttt{replan} triggers, maintains the active \texttt{SkillCmd}, resolves belief-conditioned branches, and decides skill handoff or replanning.
    \item \textbf{$F_{\texttt{skill}}$ (Skill VLA Flow)}: uses the $\texttt{pi05}$ ($\pi_{0.5}$) model from~\cite{pi05} to execute the active \texttt{SkillCmd} at $\sim$2Hz. It emits an \texttt{ActionChunk} and a progress prediction; inspection skills also emit \texttt{inspection\_done}.
    \item \textbf{$F_{\texttt{ctrl}}$ (Controller Flow)}: consumes \texttt{ActionChunk}s and produces 200Hz motor commands.
\end{itemize}
Appendix~\ref{app:case_study_ext} gives the lower-level belief representation, \texttt{PlanChunk} structure, and skill library details.

\textit{Programming view of a skill Flow.}
We use ``skill'' operationally: a finite-memory behavior that is started by a \texttt{SkillCmd}, reads synchronized observations and robot state, and returns low-level actions plus status.

\begin{defn}[Skill]
A \emph{skill} is a finite-memory, temporally extended behavior with three parts:
(i)~\emph{initiation}: a trigger condition that starts it;
(ii)~\emph{execution}: local state and a within-skill transition that emits low-level commands or action chunks while active;
    (iii)~\emph{termination}: conditions that end it, emitting status, success, failure, or handoff events.
\end{defn}

This is the initiation--policy--termination structure of options and skill formalisms~\cite{Sutton1999Options,Konidaris2018Skills}, augmented with status and handoff events for asynchronous monitoring.

\begin{prop}[Skills as Flows]
Every deterministic finite-memory skill whose initiation and termination read timestamped input histories is representable as a Retriever Flow.
When a separate monitor manages initiation and handoff, as in our pipeline, the skill splits into a monitor Flow and a skill Flow linked by \texttt{SkillCmd}, progress prediction, and terminal-status streams.
\end{prop}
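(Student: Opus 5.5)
The plan is to reduce the proposition to Theorem~\ref{thm:representability_main}. The first step encodes a deterministic finite-memory skill as a deterministic policy in the \asyncmodel. We augment the skill's local memory $m$ with a finite mode variable $\sigma\in\{\texttt{idle},\texttt{active},\texttt{done}\}$, so the combined state is $h=(\sigma,m)$. Initiation, execution and termination then become branches of one local transition $(h_{t_k^+},y_{t_k})=f(h_{t_k^-},x_{t_k})$:
\begin{itemize}
\item in mode \texttt{idle}, $f$ evaluates the initiation predicate on $x_{t_k}$ and, if it holds, resets $m$ to the skill's initial memory and sets $\sigma=\texttt{active}$;
\item in mode \texttt{active}, $f$ applies the within-skill transition and emits a low-level command or \texttt{ActionChunk};
\item $f$ then evaluates the termination conditions and, if any is met, emits a status, success, failure or handoff event and returns to \texttt{idle}.
\end{itemize}
The output type is a tagged union of commands and status events, with $\bot$ on ticks where nothing is emitted. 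Since both predicates read timestamped input histories, we require each such history to be realized by a deterministic \texttt{sync} policy over committed events. Typically this is \texttt{Window} when a bounded look-back suffices, or \texttt{Latest} of an upstream \texttt{Scan} summary otherwise. The skill's decision times are the ticks of a single clock: a \texttt{Rate}, a \texttt{Trigger} on the initiation event, or their union. These ticks are locally finite.

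The second step checks the remaining hypotheses of Theorem~\ref{thm:representability_main}. Memory is fixed-size because $m$ is finite and $\sigma$ has three values. The snapshot is a finite aligned input record assembled by \texttt{Join}. Any feedback, for example the skill reading robot state that its own commands affect, passes through the environment or a strictly-before-$t_k$ edge, so it is strictly causal. The theorem then yields a graph with the same action stream. In fact the construction above is already a single \texttt{Scan} (with an optional \texttt{Map}) under one clock, so it is one Flow, matching the single-rate case of that proof.

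The third step, the monitor split, is the main obstacle. We must show that factoring $f$ into two Flows does not change the output stream. The monitor Flow owns $\sigma$ and the initiation/termination logic. The skill Flow owns $m$ and the execution branch. The monitor emits \texttt{SkillCmd}, and termination is computed from the skill's progress and terminal-status streams. The difficulty is that in the single-Flow version these decisions happen in the same step, whereas after the split a status emitted at $t_k$ is visible to the monitor only at a later tick. I would handle this in one of two ways.
\begin{itemize}
\item Declare a deterministic same-tick order: skill, then monitor. The proposition's footnote on acyclic same-tick dependencies admits this, and with it the composite reproduces the single-Flow trace exactly.
\item Otherwise, read the proposition as equivalence up to the declared one-tick handoff delay. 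The delay is then part of the policy, as the paper stresses for clocks and sync.
\end{itemize}
The cycle monitor$\to$skill$\to$monitor contains this strictly causal edge, so by the closure property of CSFs the two-Flow subgraph is itself a CSF. Equality of traces then follows by induction over the merged, locally finite tick sequence. At each tick, both systems hold the same $(\sigma,m)$ and consume the same committed snapshots, so they emit the same outputs.
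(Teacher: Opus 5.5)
Your proposal is correct and takes essentially the paper's route. The paper likewise treats the proposition as an instance of Theorem~\ref{thm:representability_main}: initiation arrives through synchronized trigger events or \texttt{SkillCmd}s, the active flag and skill memory live in Flow state with the within-skill policy run in \texttt{step()}, and status and terminal events are emitted as typed output streams.

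Your handling of the monitor split is more careful than the paper's, which relies only on the strictly causal $F_{\texttt{exe}}\leftrightarrow F_{\texttt{skill}}$ cycle (progress is consumed at a later monitor tick), i.e.\ your second option. One small fix: a tick that emits both a final command and a terminal status needs a product of optional fields or separate typed ports, not a single tagged-union stream.
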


\noindent This is an instance of Theorem~\ref{thm:representability_main}, with each part of the definition mapped to the Flow interface.
Initiation~(i) arrives through synchronized trigger events or \texttt{SkillCmd}s.
Execution~(ii) stores the active flag and skill memory in Flow state and runs the within-skill policy in \texttt{step()}.
Termination~(iii) emits status and terminal events as typed output streams.
The VLA skill Flow therefore has the following pseudocode contract:
\begin{minipage}{\columnwidth}
\begin{lstlisting}[style=retriever, language=Python, numbers=none, basicstyle=\ttfamily\scriptsize, xleftmargin=4pt, framexleftmargin=4pt, aboveskip=1.0em, belowskip=1.0em]
F_skill.clock = Rate(hz=2)
F_skill.inputs = {
    obs: Latest(F_cam.obs),
    state: Latest(F_cam.state),
    cmd: Latest(F_exe.SkillCmd),
}

def F_skill.step(obs, state, cmd, memory):
    action_chunk = VLA(obs, state, cmd, memory)
    progress = progress_prediction_module(obs, state, cmd, memory)
    return ActionChunk(action_chunk), SkillProgress(progress)
\end{lstlisting}
\end{minipage}
Symbolic operators, option-like skills, and learned VLA primitives can use this interface once their commands, observations, state, actions, progress predictions, and terminal status are exposed as typed streams.\footnote{Retriever does not replace option or TAMP formalisms; it provides the runtime substrate that composes them with asynchronous perception and control.}

\subsection{Connections and Synchronization}
Figure~\ref{fig:case-study-pipeline} separates \emph{main data connections} (gray arrows), \emph{control connections} (red triggers), and \emph{auxiliary inputs} (blue).

\paragraph{Main data connections (gray).}
\begin{itemize}[leftmargin=*]
    \item \textbf{Perception to belief and skill.} The 30Hz observation stream from $F_{\texttt{cam}}$ is synchronized to the ticks of $F_{\texttt{belief}}$ and $F_{\texttt{skill}}$ via \texttt{Latest} (sample-and-hold with a bounded-staleness check). This produces a well-defined input snapshot for each belief update and VLA \texttt{step()} (Fig.~\ref{fig:async-vs-sync-comparison}).
    \item \textbf{Skill to control.} $F_{\texttt{skill}}$ emits an \texttt{ActionChunk}; $F_{\texttt{ctrl}}$ consumes it through deterministic chunk playback, sampling one action per 200Hz control tick. This separates the slow VLA inference clock from the safety-critical control clock.
\end{itemize}

\begin{figure*}[t]
    \centering
    % \includesvg[width=\linewidth]{figures/illustration/temporal_chunking.svg}
    \includegraphics[width=0.97\linewidth]{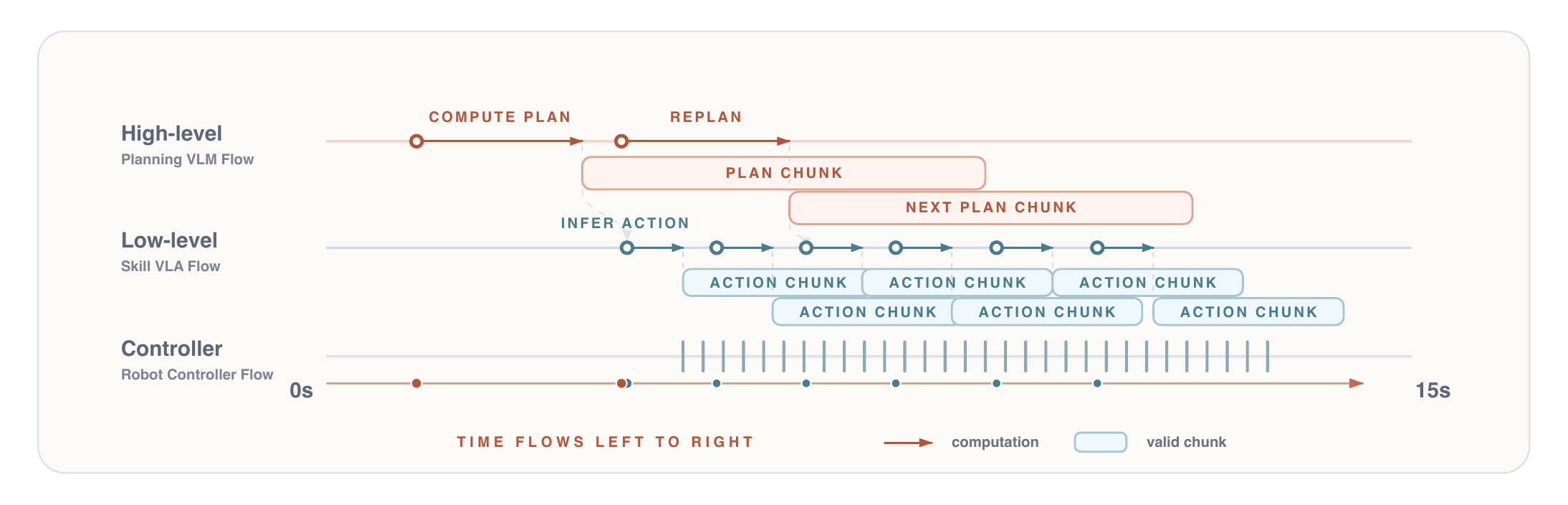}
    \caption{\textbf{Temporal chunking} (\textbf{D2}, \textbf{D4}). Time-extended outputs appear at two levels: the controller plays buffered \texttt{ActionChunk}s, while $F_{\texttt{exe}}$ commits \texttt{PlanChunk}s at skill boundaries. Arrows are computation intervals; blocks are chunk-validity horizons. Overlap keeps downstream clocks running while the next plan or action chunk is produced.}
    \label{fig:temporal_chunking}
    \vspace{-0.8em}
\end{figure*}

\paragraph{Control connections (red): plan chunking and monitored handoff.}
The central coordinator is $F_{\texttt{exe}}$.
It receives belief updates from $F_{\texttt{belief}}$, \texttt{PlanChunk} proposals from $F_{\texttt{plan}}$, and progress predictions from $F_{\texttt{skill}}$.
It then selects the next active \texttt{SkillCmd}.\footnote{The $F_{\texttt{exe}}\!\leftrightarrow\!F_{\texttt{skill}}$ cycle remains strictly causal: progress emitted at a skill tick is consumed only at a later monitor tick.}
\begin{itemize}[leftmargin=*]
    \item \textbf{Trigger chain.} Information-gathering skills emit \texttt{inspection\_done}, and belief updates emit \texttt{belief\_updated}. The monitor then requests \texttt{replan}, receives a \texttt{PlanChunk}, and commits future skills only at explicit handoff boundaries.
    \item \textbf{Plan chunking.} $F_{\texttt{plan}}$ proposes a short-horizon \texttt{PlanChunk} while the current skill is still running. $F_{\texttt{exe}}$ buffers this proposal and switches to the next \texttt{SkillCmd} only at monitor-approved handoff points, so planning overlaps execution without mid-skill rewrites. The default commit rule never interrupts the active skill: if the agent is executing skill \texttt{C} when a new plan \texttt{A,B,C,D,E} arrives, it finishes \texttt{C} and continues with \texttt{D,E}.
    \item \textbf{Stale-plan handling.} Each proposal records the belief snapshot and current-skill context that produced it. The monitor commits it only while that context remains compatible with the active execution state; otherwise it drops the proposal or replans.
    \item \textbf{Progress-aware handoff.} The monitor uses progress predictions to approve handoff, detect stalls, or trigger fallback.
\end{itemize}
Together with action-chunk playback on the data path, these monitor policies instantiate temporal chunking (Fig.~\ref{fig:temporal_chunking}) at two levels: plans are chunks over future skills, while actions are chunks over future controller commands.
As with action chunking~\cite{zhao2023learning}, each commit rule is one strategy within this pattern, and it can be tuned or replaced independently of the rest of the pipeline.

\paragraph{Auxiliary inputs (blue).}
A \texttt{LanguageGoal} input starts or restarts planning, while an optional \texttt{Teleoperation} input can override the action stream.
Both are source Flows integrated by $F_{\texttt{exe}}$ and logged for replay.
Because they have no upstream trigger events, they run on periodic clocks (e.g., 1Hz goal polling, 100Hz teleoperation) and emit events that trigger downstream Flows.
Appendix~\ref{app:case_study_ext} gives the contract-level task interfaces.

\section{Empirical Evaluation}
\label{sec:eval}

%%%%%%%%%%%%%%%%%%%%%%%%%%%%%%%%%%%%%%%%%%%%%%%%%%%%%%%%%
\begin{figure}[t]
    \centering
    \begin{minipage}{0.49\linewidth}
        \centering
        \includegraphics[width=\linewidth]{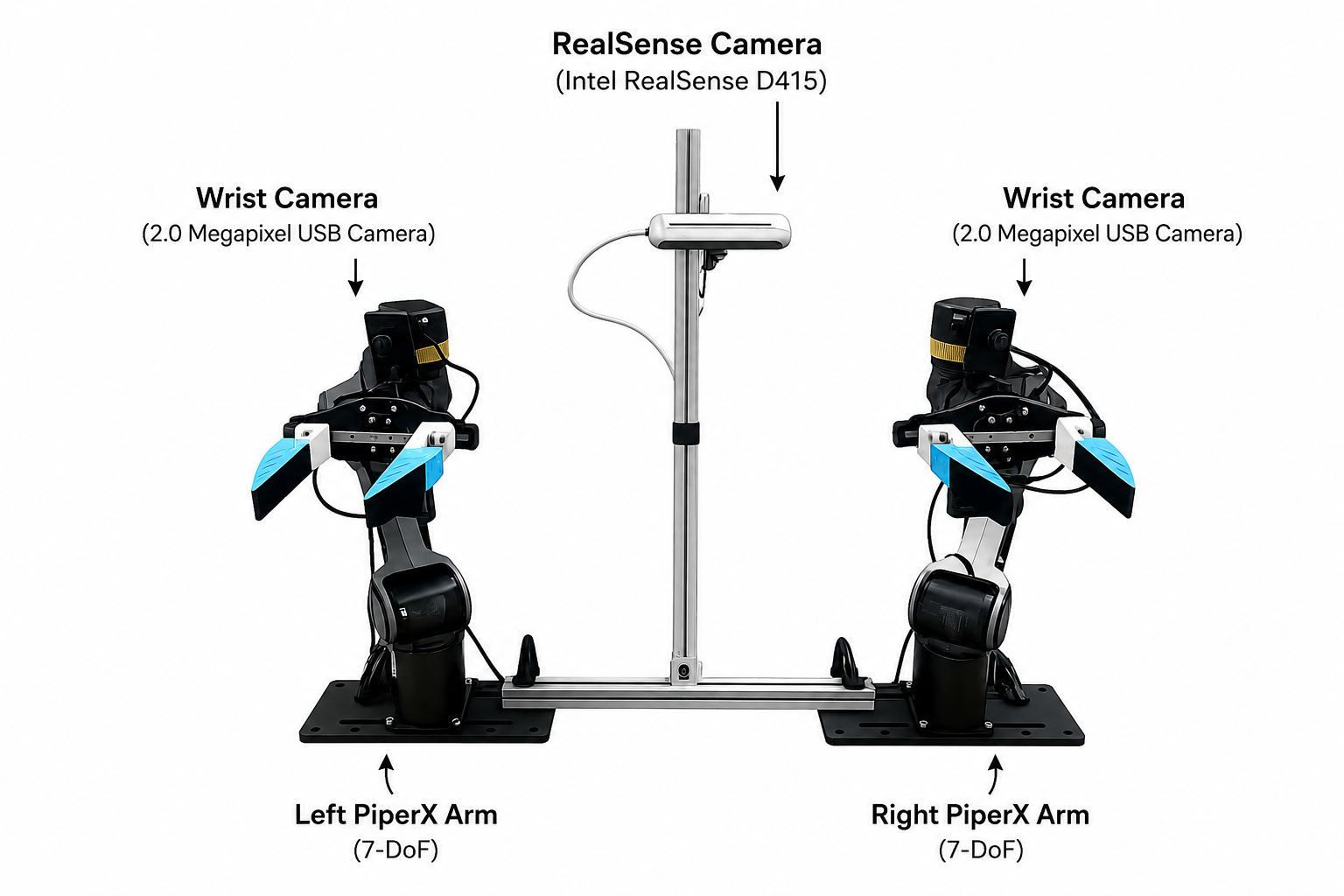}\\[0.15em]
        {\footnotesize (a) Bimanual platform}
    \end{minipage}\hfill
    \begin{minipage}{0.49\linewidth}
        \centering
        \includegraphics[width=\linewidth]{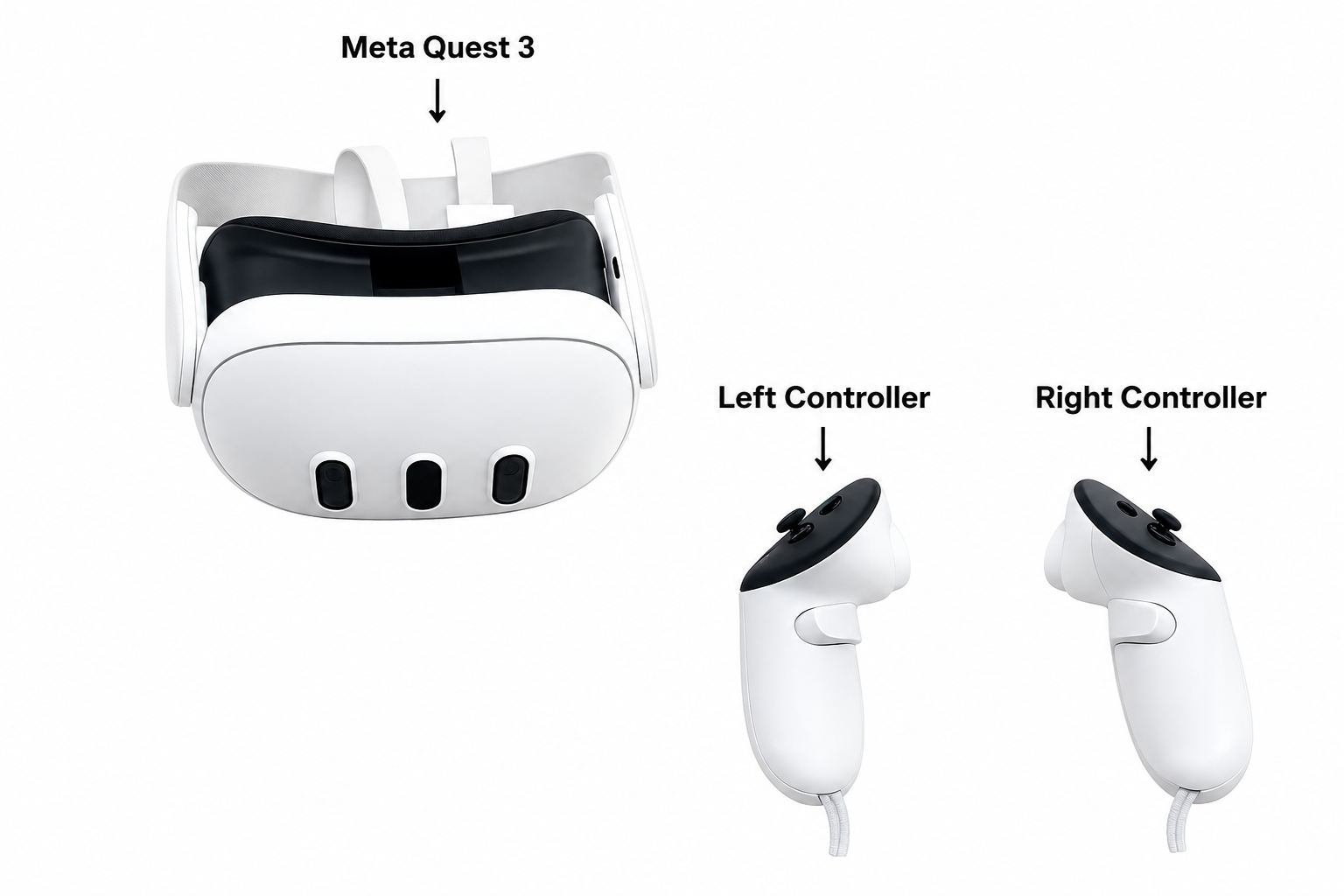}\\[0.15em]
        {\footnotesize (b) Meta Quest~3 teleoperation}
    \end{minipage}
    \vspace{-0.4em}
    \caption{\textbf{Robot hardware used in the real-robot evaluation.} \textbf{(a)}~The bimanual PiperX platform uses wrist cameras and an external RealSense camera. \textbf{(b)}~Optional Meta Quest~3 teleoperation provides human-in-the-loop recovery and data collection, integrated as an optional input Flow and logged for replay.}
    \label{fig:robot_hardware}
    \vspace{-0.5em}
\end{figure}
%%%%%%%%%%%%%%%%%%%%%%%%%%%%%%%%%%%%%%%%%%%%%%%%%%%%%%%%%

%%%%%%%%%%%%%%%%%%%%%%%%%%%%%%%%%%%%%%%%%%%%%%%%%%%%%%%%%
\begin{table}[t]
    \centering
    \scriptsize
    \setlength{\tabcolsep}{3.2pt}
    \begin{tabular}{lcccc}
    \toprule
    \textbf{Condition} & \multicolumn{2}{c}{\textbf{Spice search and seasoning}} & \multicolumn{2}{c}{\textbf{Bag retrieval}} \\
    \cmidrule(lr){2-3}\cmidrule(lr){4-5}
     & Proj. time & Task progress & Proj. time & Task progress \\
    \midrule
    \texttt{Retriever-0} (full) & 56s & 73\% & 36s & 100\% \\
    \textminus~Plan chunking & 65s & 63\% & 37s & 93\% \\
    \textminus~Progress prediction & -- & 12.5\% & -- & 25\% \\
    \textminus~Belief memory & -- & 20\% & 45s & 66\% \\
    \textminus~Replanning & -- & 15\% & -- & 47\% \\
    % \textminus~Async execution & -- & -- & -- & -- \\
    +~Human-in-the-loop & 50s & 100\% & 35s & 100\% \\
    \midrule
    $\pi_{0.5}$ (VLA only) & -- & 0\% & 41s & 41\% \\
    Diffusion policy only & -- & 0\% & 40s & 23\% \\
    \bottomrule
    \end{tabular}
    \caption{\textit{Real-robot evaluation and ablations.} We report task progress $p$ and projected total time $\widehat{T}=T_{\mathrm{best}}/p$, rounded to the nearest second. Here $T_{\mathrm{best}}$ is the elapsed time to the best achieved progress. This normalization makes partial runs comparable; it is not a measured completion time. ``--'' marks values that are not meaningful, such as stalled or non-terminating runs. \texttt{Retriever-0} is the pipeline in Fig.~\ref{fig:teaser}; Appendix~\ref{app:case_study_ext} defines progress scoring.}
    \label{tab:robot_eval}
    \vspace{-1.5em}
\end{table}
%%%%%%%%%%%%%%%%%%%%%%%%%%%%%%%%%%%%%%%%%%%%%%%%%%%%%%%%%

Our evaluation asks three questions:
(1) \textbf{Capability}: Can Retriever support long-horizon real-world manipulation that coordinates slow reasoning (VLM), medium-frequency skills (VLA), and fast control?
(2) \textbf{Efficiency}: Does asynchronous execution reduce end-to-end task time compared to blocking implementations?
(3) \textbf{Usability}: Does making time explicit make agent behavior easier to reproduce, replay, and debug?

%%%%%%%%%%%%%%%%%%%%%%%%%%%%%%%%%%%%%%%%%%%%%%%%%%%%%%%%%
\begin{figure*}[t]
    \centering
    \includegraphics[width=0.97\linewidth]{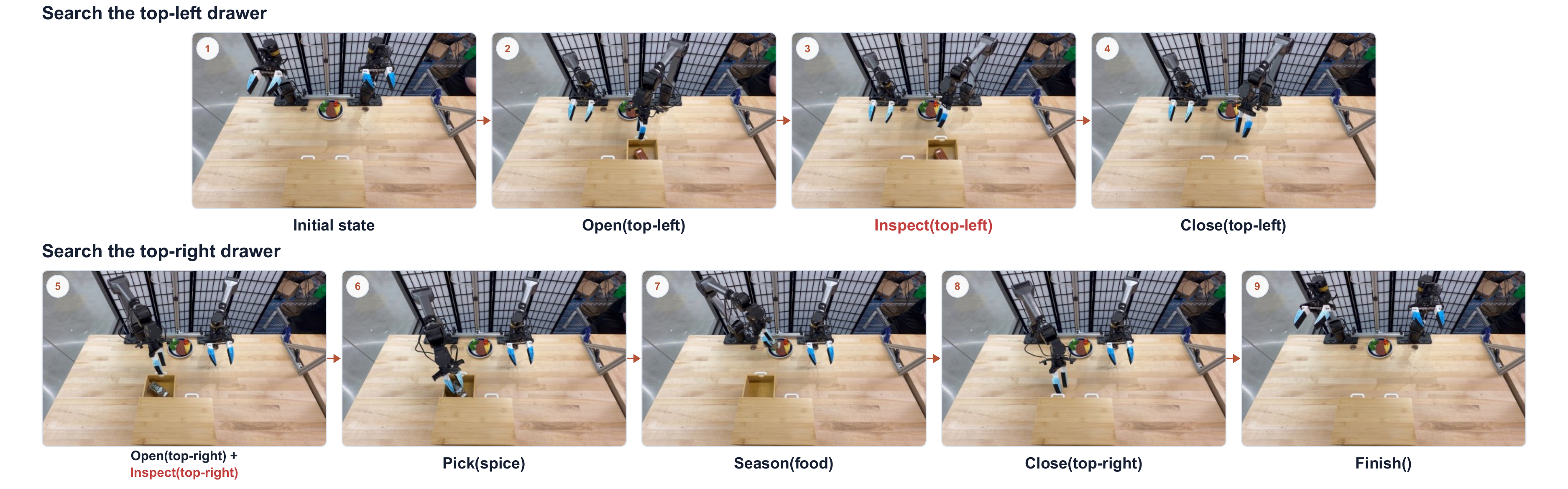}\\[0.9em]
    \includegraphics[width=0.97\linewidth]{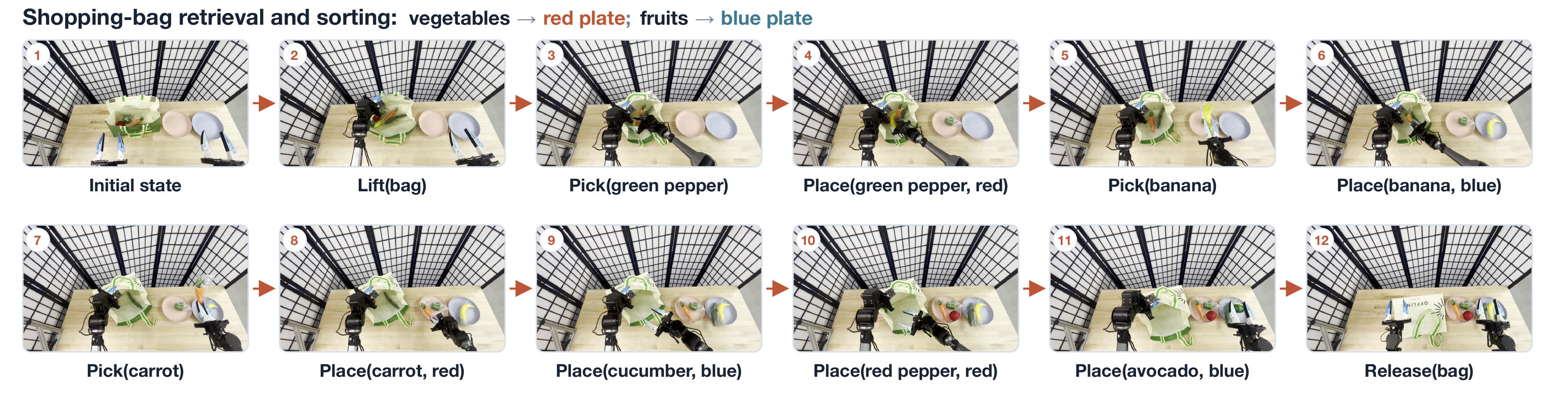}
    \caption{\textbf{Real-robot task demonstrations.} \emph{Top (spice search and seasoning)}: a representative closed-loop run first inspects the top-left drawer and updates belief after not finding the target spice, then searches the top-right drawer, picks the spice, seasons the food, puts it back, closes the drawer, and terminates. Numbered frames show the executed skill sequence; information-gathering actions (red) trigger belief updates, where the plan branches between continuing the search and executing the seasoning sequence. \emph{Bottom (bag retrieval and sorting)}: in one closed-loop run, the robot lifts a deformable shopping bag, retrieves several items, sorts vegetables onto the red plate and fruits onto the blue plate, and finally releases the bag.}
    \label{fig:task-demos}
    \vspace{-1.0em}
\end{figure*}
%%%%%%%%%%%%%%%%%%%%%%%%%%%%%%%%%%%%%%%%%%%%%%%%%%%%%%%%%

\subsection{Capability: Closed-Loop Asynchronous Pipelines}
We first test whether Retriever supports multi-rate systems that are hard to build with existing tools.
Both tasks close the loop across every module---perception, belief memory, VLM planning, VLA skills, and the execution monitor: spice search exercises the belief-update and replanning cycle on every drawer, while bag retrieval stresses the tight perception--manipulation loop under deformation.
We deploy the asynchronous pipeline from Sec.~\ref{sec:case_study} on a bimanual robot platform (Fig.~\ref{fig:robot_hardware}) in two tasks; Fig.~\ref{fig:task-demos} shows representative runs.

\textit{(1) Bag retrieval:} A deformable shopping bag holding several objects sits beside two plates. One arm lifts and steadies the bag while the other reaches in to retrieve the target item; in the longer sorting variant (Fig.~\ref{fig:task-demos}, bottom), the robot retrieves every item and places each on the plate matching its category before releasing the bag. Because the bag deforms with every extraction, VLM scene understanding and VLA manipulation must remain in a tight loop.

\textit{(2) Spice search and seasoning:} Several spice bottles are placed randomly across four closed drawers, and the language goal names the one to use (e.g., ``season the steak with black pepper''). The robot opens a drawer by its handle, inspects the container, and updates its belief: if the target spice is not there, it closes the drawer and continues the search; once found, it picks the bottle, seasons the food, puts the bottle back, and closes the drawer. Locating the right bottle takes multiple VLM calls, while the pick-and-place skills run under high-frequency control.

Both tasks are scored by \emph{task progress}.
With the full pipeline, \texttt{Retriever-0} reaches 73\% task progress on spice search (averaged over four drawers) and 100\% on bag retrieval (Table~\ref{tab:robot_eval}). Based on qualitative inspection of these runs, low-level policy execution appeared to be the main bottleneck; high-level reasoning and skill switching were comparatively reliable. The human-in-the-loop version uses teleoperation to collect data or correct the policy, which achieves full success.

We train the shared low-level skill layer with roughly 200--300 complete expert episodes across both tasks, each about 1--3 minutes long, for a total of a few hours of robot data.
In this setting, either fine-tuning $\pi_{0.5}$ or training a multi-task diffusion policy from scratch works.
The optional \texttt{Teleoperation} Flow uses a Meta Quest~3 controller interface (Fig.~\ref{fig:robot_hardware}) for recovery and data collection in the human-in-the-loop condition.
Its actions are logged alongside the robot-agent event trace.
Task protocols, skill catalogs, and progress scoring definitions are summarized in Appendix~\ref{app:case_study_ext}; hardware/setup and benchmark details are in Appendix~\ref{app:eval_ext}.

% We successfully deployed both pipelines. By leveraging Retriever's synchronization primitives, we could compose these disparate modules (event-triggered VLM, 2Hz VLA, 200Hz Controller, 30Hz camera) into a single, cohesive runtime graph. The system executed the long-horizon tasks smoothly, with the synchronization policies automatically handling the hand-offs between the slow planner and the fast controller.

% \todo{Link to supplementary videos showing these tasks.}
% Placeholder Data:
% Confirmed valid execution on real hardware for both tasks.
% Video proofs to be added.

% Task-level results and ablations are summarized in Table~\ref{tab:robot_eval}.

%%%%%%%%%%%%%%%%%%%%%%%%%%%%%%%%%%%%%%%%%%%%%%%%%%%%%%%%%
\begin{figure*}[t]
    \centering
    \begin{minipage}[t]{0.32\linewidth}
        \centering
        \includegraphics[width=\linewidth]{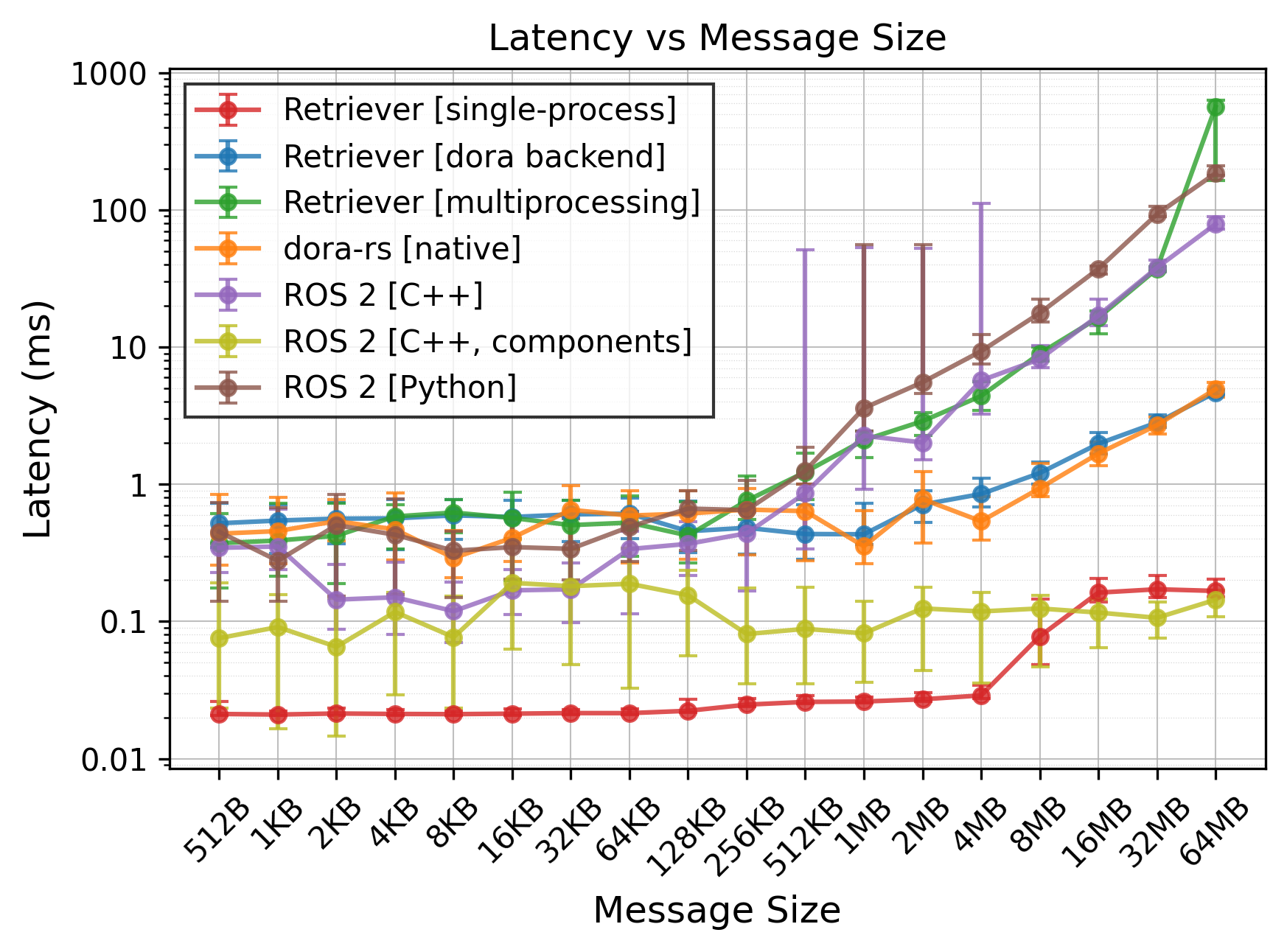}
        \vspace{-0.5em}
    \end{minipage}\hfill
    \begin{minipage}[t]{0.30\linewidth}
        \centering
        \includegraphics[width=\linewidth]{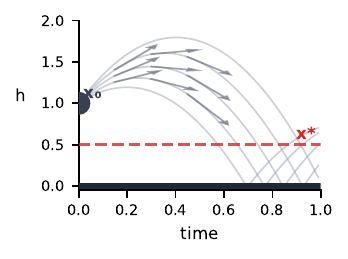}
        \vspace{-0.5em}
    \end{minipage}\hfill
    \begin{minipage}[t]{0.32\linewidth}
        \centering
        \includegraphics[width=\linewidth]{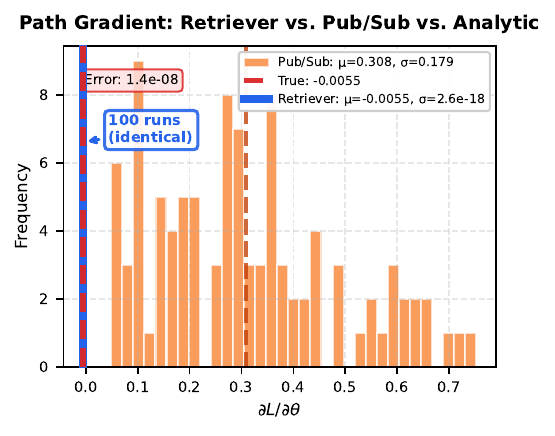}
        \vspace{-0.5em}
    \end{minipage}
    \vspace{-1.0em}
    \caption{\textbf{Efficiency \& Determinism Visualizations.} (a) End-to-end message latency vs.\ payload size (log--log). \texttt{Retriever} with a Dora transport backend closely matches native \texttt{dora-rs}, suggesting low abstraction overhead compared to transport costs. (b) Differentiable physics visualization: ball bouncing with hybrid dynamics. (c) Functional determinism for gradient-based learning: event-time semantics yield identical \emph{path gradients} across runs, while arrival-time pub/sub yields a broad gradient distribution.}
    \vspace{-1.5em}
    \label{fig:results_benchmark}
\end{figure*}
%%%%%%%%%%%%%%%%%%%%%%%%%%%%%%%%%%%%%%%%%%%%%%%%%%%%%%%%%

\subsection{Efficiency: Completion Time \& Ablations}
We next measure message-passing latency, report progress-normalized projected time for the robot tasks, and ablate four architectural features: \textit{plan chunking}, progress prediction, belief memory, and replanning.
Projected time normalizes elapsed time by achieved progress, keeping partial runs comparable when conditions fail at different milestones.
In Table~\ref{tab:robot_eval}, ``--'' marks values without a meaningful measurement, such as stalled or non-terminating runs.

\textbf{Message Passing Latency.} 
We measure Retriever's runtime overhead with a message-passing latency benchmark (Fig.~\ref{fig:results_benchmark}(a)).
Retriever with the Dora backend closely matches native Dora and is faster than ROS 2 process-based Python/C++ baselines.
The figure also includes a stronger ROS 2 C++ \emph{components} baseline, in which composable nodes share one process~\cite{Macenski2023ROS2Composition}.\footnote{This baseline should be compared with Retriever's single-process backend, which has lower latency for small messages used in control and coordination.}
This producer--consumer benchmark isolates abstraction overhead rather than full robot throughput.\footnote{Robot runs also include model inference, device drivers, synchronization policies, and control-loop deadlines.}
Appendix~\ref{app:eval_ext} gives benchmark details.

\textbf{Plan Chunking.} We compare asynchronous multi-step chunking with a synchronous blocking baseline (Table~\ref{tab:robot_eval}).
In the default setting, the VLM plans the next chunk while the VLA executes the current one.
Each \texttt{PlanChunk} contains up to five skills, allowing planning to overlap multiple skill executions.
The ablation plans only one next skill.
The monitor advances to the next \texttt{SkillCmd} only at explicit handoff points indicated by progress prediction.
When no new information is gathered, the one-skill ablation still waits for planning at every skill boundary, adding idle time and increasing projected total time.
% In contrast, Retriever's pipelining completely hides the VLM's inference latency, reducing the total task time by XX.
Plan chunking and handoff semantics are summarized in Appendix~\ref{app:case_study_ext}.
\textbf{Progress prediction.} The ablation replaces the learned progress predictor with VLM queries that ask whether the current skill has finished (Table~\ref{tab:robot_eval}).
The robot approximately completes the first skill, but the VLM cannot reliably detect completion, so the monitor receives no usable skill-switching signal and execution does not advance.
We report the corresponding approximate first-stage progress but do not extrapolate a projected total time from a run that stalls at its first handoff.
See Appendix~\ref{app:case_study_ext} for the interface contract.
\textbf{Belief memory.} Removing belief memory leaves the agent unaware of which drawers it has already searched: on spice search it repeatedly re-searches the first drawer and never terminates, so its projected time is not meaningful. On bag retrieval it finishes but sometimes misses objects.
\textbf{Replanning.} Planning only once decouples the plan from belief updates: after opening a drawer, the plan no longer matches what the agent now knows, so it does not know what to do next. On bag retrieval, the remaining bag contents stay unknown and failures are not recovered. Both runs stall partway, so projected times are omitted.
\textbf{Policy-only baselines.} The end-to-end $\pi_{0.5}$ and from-scratch multi-task diffusion baselines receive camera observations, robot state, and the language goal, but no explicit planner, belief memory, execution monitor, or skill-program structure.
On the drawer task they score $0\%$: all drawers start closed, so a policy alone does not even know where to start without the search-and-inspect behavior that planning and belief provide.
Bag retrieval still requires online perception and target localization, but less explicit belief-space search; the two baselines reach $41\%$ and $23\%$, respectively.
% This dynamic transition reduced idle time by an additional 15\%, further demonstrating that fine-grained, data-driven coordination outperforms static scheduling.
% Placeholder Data:
% Enabling "online chunking" (VLM predicts next 3 steps) reduced avg.\ per-step latency by XXX compared to "no chunking" (querying VLM at every step), as the VLA could execute the current chunk while the VLM prepared the next.

\subsection{Usability: Determinism, Replay, and Debugging}
Finally, we test whether explicit clocks and synchronization make timing-sensitive runs reproducible and inspectable.

\textit{Functional Determinism.} For a differentiable bouncing-ball system, we estimate the gradient of location at $t=1$ with respect to initial velocity $v_0$ (Fig.~\ref{fig:results_benchmark}(b)).
In hybrid systems, the computation graph and its gradient depend on the exact timing of discrete events such as impact.
For fixed input histories, Retriever reproduces the same computation graph across runs and therefore the same path gradients.
Across 100 runs, Retriever yields identical execution traces and a collapsed gradient distribution (std $\approx 10^{-18}$).
By contrast, pub/sub-style arrival semantics produce diverse traces and gradients under mocked scheduling jitter (Fig.~\ref{fig:results_benchmark}(c)); see Appendix~\ref{app:eval_ext}.
% We quantified this by recording a 60-second multi-sensor log and replaying it $N=20$ times. The ROS~2-style baseline (using callbacks) exhibited a mean trace discrepancy of 8.4\% (due to non-deterministic message arrival order). In contrast, Retriever achieved \textbf{0.0\% discrepancy}, producing bit-identical execution traces. This \textbf{functional determinism} allows developers to debug rare timing bugs by simply replaying logs, transforming ``heisenbugs'' into reproducible events.

% Placeholder Data:
% Baseline (ROS~2-style): Mean trace discrepancy of D=8.4% (Levenshtein distance).
% Retriever: Discrepancy of exactly D=0.0%.

% Gradient histogram now included in Fig.~\ref{fig:results_benchmark}.

\textit{Implementation boilerplate.}
We also count user-authored code for the message-latency benchmark: 117 lines in one Retriever file versus 191 lines across four ROS 2 Python files, including build configuration.
This excludes framework internals and is only a coarse measure of benchmark boilerplate.
Appendix~\ref{app:eval_ext} gives the raw comparison.

% !TEX root = ../main_arxiv.tex
\section{Conclusion}
\label{sec:conclusion}

We introduced Retriever, a programming model and runtime for asynchronous, compositional robot-agent systems in open-world manipulation. By making time and synchronization explicit, Retriever lets developers specify when modules run, which inputs they consume, and what must be recorded for replay. Our experiments show Retriever coordinating VLM planning, skill execution, belief updates, and progress monitoring for long-horizon robot manipulation. The same explicit timing makes runs reproducible and easier to debug.

With a fixed replay trace, Retriever's deterministic dataflow contract makes event traces and path gradients reproducible (Thm.~\ref{thm:functional_determinism_main}; Sec.~\ref{sec:eval}). Each recorded step pairs an output with the exact inputs and timing that produced it. These traces may support training future, more integrated end-to-end systems.

\paragraph{Code and materials.}
Retriever is open source: \texttt{pip install retriever-core} (imported as \texttt{retriever}). Source code, documentation, and runnable examples are available at \url{https://github.com/openretriever/retriever} and \url{https://retriever.build}. Applied robot examples are available in GoldenRetriever (\url{https://golden.retriever.build}), with the project home at \url{https://openretriever.org}. Retriever Hub packs let projects load Flows, typed messages, and pipelines by name. Each pack resolves to a pinned commit and is fetched without cloning.

% Retriever makes time and input-consumption semantics explicit for hierarchical robot agents by combining (i) a stream-based formulation and causal stream functions, (ii) a programming model (Flows with explicit run clocks and synchronization policies), and (iii) a runtime that supports deterministic stepping and replay.
% We believe this provides a reusable substrate for building robust closed-loop agents under real-world asynchrony and sets the stage for building community and collecting closed-loop asynchronous agent data.

% \section*{Acknowledgments}
% TODO later

%% Use plainnat to work nicely with natbib.

\newpage
\bibliographystyle{unsrtnat}
\bibliography{references}

\clearpage
% \appendix
\onecolumn
\appendices

\maybeTodoList

\begin{adjustwidth}{0.75in}{0.75in}
% !TEX root = ../main_arxiv.tex
\section{Historical Lineage: Programming Models and System Abstractions for Hardware and Robots}
\label{app:history}

In systems practice, ``independence from hardware'' (including robot hardware) is not a single property.
It typically decomposes into four separable goals:

\begin{enumerate}
    \item \textbf{Portability of meaning (semantic portability).} A program has the same externally observable behavior across machines (modulo performance). This is the hardest kind of independence; it requires explicit semantic contracts.
    \item \textbf{Portability of interfaces (API/ABI portability).} Code compiles and runs across platforms because it targets a stable interface boundary (e.g., POSIX~\cite{POSIX1988}).
    \item \textbf{Portability of placement (location transparency).} Computation can be placed on different cores/machines without rewriting the program; the runtime chooses placement.
    \item \textbf{Portability of performance (performance portability).} The same program attains reasonable efficiency across heterogeneous devices (CPU/GPU/accelerators), usually via compiler/runtime optimization and hardware abstraction layers.
\end{enumerate}

Major computing shifts, from minicomputers to clouds and GPUs, were often absorbed through a \textit{stable systems abstraction}: a small interface shared by many programs and implemented by many backends.
UNIX + C is one canonical example of such a mechanism at the OS + language boundary~\cite{Ritchie1974UNIX, Ritchie1984Evolution, Ritchie1993C}.

Retriever targets a narrow waist for \textit{robotic agent computation graphs}.
The program fixes the wiring, message boundaries, and timing semantics, while supported runtimes and backends can vary without rewriting it.

This appendix traces the systems ideas behind that separation and maps them to Retriever's design choices.

\paragraph{Organization.}
We group the lineage by recurring design problems and the corresponding historical anchors:
\begin{itemize}[leftmargin=*]
    \item \textbf{Uniform interfaces (hardware independence):} UNIX/POSIX and the C portability story~\cite{Ritchie1974UNIX, POSIX1988, Ritchie1984Evolution, Ritchie1993C}.
    \item \textbf{Determinism under concurrency:} KPN and synchronous dataflow (schedule-independent meaning + strict causality)~\cite{Kahn1974Semantics, Lee1987SDF}.
    \item \textbf{Streams over time:} FRP and Arrowized FRP (signals, causality, feedback)~\cite{Nilsson2002FRP, Hughes2000Arrows}.
    \item \textbf{Scalable isolation and placement:} actors and message passing (location transparency without shared-state races)~\cite{Hewitt1973Actor, Armstrong2003Erlang}.
    \item \textbf{Auditability and replay:} event logs as the system of record~\cite{Kreps2011Kafka}.
    \item \textbf{Graph IRs as a stable contract:} modern learning frameworks (graph/IR enables portability + optimization)~\cite{Abadi2016TensorFlow, Paszke2019PyTorch}.
    \item \textbf{Robotics integration vs semantics:} ROS-style middleware provides transport and ecosystem, but not a semantic runtime~\cite{Quigley2009ROS, Macenski2022ROS2}.
    \item \textbf{Decision-theoretic timing:} a brief continuous-time / (PO-)SMDP view of asynchronous decision epochs is summarized in Appendix~\ref{app:formulation_related}.
\end{itemize}
The remainder of this section elaborates each theme and distills the implication for a Retriever-style semantic contract.

\subsection{UNIX: Uniform Interfaces as an Anti-Hardware Strategy}

\subsubsection{The Core Bet: A Small Set of Composable Primitives}
The 1974 UNIX paper frames UNIX as a general-purpose time-sharing OS whose design emphasizes a small set of core mechanisms and a programming environment conducive to composition~\cite{Ritchie1974UNIX}.
Two ideas became especially influential for hardware independence:
\begin{itemize}
    \item \textbf{A uniform I/O interface} spanning files, devices, and inter-process communication.
    \item \textbf{A process model} that makes programs composable through standard streams and pipes.
\end{itemize}
The paper explicitly highlights ``compatible file, device, and inter-process I/O'' as a central property of the system.
This phrase matters because it encodes a portability strategy: if \textit{devices} and \textit{IPC} can be used through the same I/O interface as files, then most user programs are insulated from device-specific details.
Instead of every program embedding device knowledge, device knowledge is concentrated in kernel drivers and the file system namespace.

\subsubsection{Special Files and Pipes}
UNIX describes \textbf{special files} (device nodes) as entries in the file system whose reads/writes are redirected into device-specific routines~\cite{Ritchie1974UNIX}.
This is the ``universal I/O'' idea in operational form: programs depend on \texttt{open/read/write/close}; devices implement those operations behind the boundary.

For Retriever, user code depends on a small set of \textit{graph-level} operations, while transport, memory movement, and clocking details stay behind the runtime boundary.

Ritchie's retrospective emphasizes pipelined commands as a key part of UNIX's evolution~\cite{Ritchie1984Evolution}.
Pipes are a simple mechanism that enables a deep property: \textbf{programs become compositional} because they agree on a stream interface.
This is not merely convenience; it is a portability pattern.
If programs compose through a stable stream boundary, you can swap implementations, move stages, or parallelize stages without rewriting the whole application.

\subsection{C as a Portability Amplifier}
UNIX alone did not create broad portability; UNIX + C did.
Ritchie's history of C describes C as a system implementation language devised for UNIX, evolving into a portable toolchain foundation~\cite{Ritchie1993C}.

Portability often comes from moving complexity into a compiler or runtime instead of duplicating it across applications.
C's abstract machine model made it practical to port UNIX across hardware by rewriting relatively little.
This becomes a recurring pattern:
\begin{itemize}
    \item OS portability: stable syscalls + portable implementation language (UNIX+C)~\cite{Ritchie1974UNIX, Ritchie1993C}.
    \item API portability: POSIX standardizes the boundary so many OSes can implement it~\cite{POSIX1988}.
    \item ML portability: computation graphs become an IR; runtimes map them to devices (TensorFlow, XLA-like stacks)~\cite{Abadi2016TensorFlow}.
    \item Distributed portability: tasks/actors become a portable unit; runtimes schedule them across clusters (Ray)~\cite{Moritz2018Ray}.
\end{itemize}

\subsection{Modularity and Abstraction Boundaries}
The UNIX lineage sits within a broader systems push toward \textit{explicit abstraction layers}.
Dijkstra's THE system emphasized hierarchical layers and sequential processes to structure multiprogramming systems~\cite{Dijkstra1968THE}.
Parnas's classic modularization criterion argues for decomposing systems by \textbf{information hiding}: modules should conceal design decisions likely to change, exposing stable interfaces~\cite{Parnas1972Decomposing}.

Hardware independence is ``Parnas applied to hardware'': hide the hardware-specific decisions (device access method, memory movement, scheduling strategy) behind stable module boundaries so they can evolve without rewriting the whole system.

\subsection{Deterministic Concurrency: Kahn Process Networks}
With the introduction of concurrency, ``portability of meaning'' becomes fragile: different thread schedules, queue timings, and buffering can change outcomes.
This is the historical niche filled by Kahn process networks (KPN)~\cite{Kahn1974Semantics} and related dataflow models.

\paragraph{KPN's Key Semantic Move}
Kahn's formulation defines each process as a function from input histories to output histories~\cite{Kahn1974Semantics}.
The deep idea is: if each node is deterministic in this stream-functional sense, the entire network has a well-defined meaning independent of low-level scheduling interleavings.

\paragraph{Implications for Robotics}
Robotics systems deviate from classical KPN (unbounded FIFO, blocking reads) due to real-time constraints (bounded buffers, dropping data). Retriever adapts this by:
\begin{enumerate}
    \item Treating the program as a \textit{graph of stream transducers} (KPN-like).
    \item Making buffering and synchronization policies first-class, explicit, and deterministic.
    \item Providing semantics that distinguish \textit{denotational meaning} (what traces are valid) from \textit{operational constraints} (latency bounds, scheduling).
\end{enumerate}

Synchronous Dataflow (SDF)~\cite{Lee1987SDF} further trades expressiveness for analyzability (static schedules, bounded memory).
For Retriever, SDF is evidence that \textbf{exposing graph structure and rates makes runtime optimization possible}.

\subsection{Functional Programming and Reactivity}
Backus argued that mainstream languages encourage stateful, hardware-shaped thinking, advocating instead for functional style with algebraic composition~\cite{Backus1978VonNeumann}.
Hughes emphasized that modularity comes from composing small parts through stable combinators~\cite{Hughes1989WhyFP}.

\subsubsection{Functional Reactive Programming (FRP)}
FRP introduced a disciplined way to describe reactive systems as transformations over time-varying values (signals) and discrete events~\cite{Elliott1997Fran, Nilsson2002FRP}.
Arrowized FRP explicitly connects to Hughes's Arrow framework~\cite{Hughes2000Arrows} as a theoretical underpinning for composing stateful signal functions.

ReactiveX (RxJava, RxJS) made event streams a practical systems abstraction.
Retriever adopts FRP's compositional view but makes buffering, latency, and causality explicit for robotics.
Causality is central: perception at time $t$ influences actions at time $t+\Delta$.
Making that ``delay'' explicit in the program graph is key to determinism.

\subsection{Actor Model and Message Passing}
\citet{Hewitt1973Actor} proposed the Actor model as universal primitives for concurrent computation, built around local state and asynchronous message passing.
Armstrong's Erlang thesis verified that share-nothing processes with message passing can build reliable distributed systems~\cite{Armstrong2003Erlang}.

Retriever's runtime adopts this model: each Flow is an isolated process (like an Actor), communicating via messages.
This provides \textbf{location transparency} and avoids shared-memory races, but the Actor model alone does not guarantee determinism. 
Retriever adds the KPN/FRP-inspired dataflow graph and explicit time policies to constrain the actors into a deterministic execution.

\subsection{Event Logs and Replay}
\citet{Kreps2011Kafka} emphasize durable, partitioned logs as the backbone of data pipelines.
Retriever likewise treats \textit{the log as the system of record} for deterministic replay and ``time-travel debugging.''
If inter-module communication is treated as an append-only sequence of typed events, the system becomes auditable and learnable from traces.

\subsection{Modern Learning Frameworks}
Modern machine learning frameworks like TensorFlow~\cite{Abadi2016TensorFlow} and PyTorch~\cite{Paszke2019PyTorch} demonstrate the power of representing computation as an explicit graph (or IR).
TensorFlow uses a dataflow graph to map computation across devices. PyTorch combines imperative authoring with structured execution for autograd. 

Retriever applies the same pattern to robotics.
An explicit graph and runtime contract enable determinism checks and potentially differentiable pipelines that are difficult to recover from ad-hoc pub/sub code.

\subsection{Robotics Middleware: ROS Limitations}
\citet{Quigley2009ROS} describe ROS as an open-source robot operating system emphasizing flexibility and ecosystem.
While ROS excels at integration, its pub/sub model acts as a \textit{transport} layer rather than a \textit{semantic} runtime. 
\begin{itemize}
    \item \textit{implicit structure}: The computation graph is often implicit in callbacks.
    \item \textit{ordering nondeterminism}: Many-to-many pub/sub lacks deterministic merge policies.
    \item \textit{external time}: Time semantics are often ad-hoc across nodes.
\end{itemize}
Retriever targets a complementary point: a higher-level programming model that makes the computation graph and synchronization policies explicit.

\subsection{Relation to Continuous-Time MDPs and (PO-)SMDPs (Brief)}
\label{app:formulation_related}

The \asyncmodel in Sec.~\ref{sec:formulation} emphasizes \emph{global continuous time} and \emph{asynchronous events} (sensor arrivals, compute completions, control emissions). Below we summarize a few decision-theoretic viewpoints that are relevant for interpreting this formulation:
\begin{itemize}[leftmargin=*]
    \item \textbf{Continuous-time control view (CTMDP/CTPOMDP):} the plant evolves continuously and actions may be interpreted as piecewise-constant (or envelope-constrained) signals over time. When rewards are defined as rates integrated over time, discounting naturally takes the continuous-time form $e^{-\lambda \tau}$ over a duration $\tau$.
    \item \textbf{Event-embedded view (SMDP / PO-SMDP):} define \emph{decision epochs} $\{t_k\}$ as a subset of event times (e.g., the union of observation arrivals and computation completions). Between epochs, the agent's command stream is fixed by the last emitted decision (or by a fixed low-level controller), so the state evolution over $[t_k, t_{k+1})$ induces a semi-Markov transition with holding time $\tau_k = t_{k+1}-t_k$. This yields a (partially observable) semi-Markov decision process with standard Bellman equations and existence results~\cite{Puterman1994MDP}.
    \item \textbf{Temporal abstraction (options/macro-actions):} our \emph{plan chunks} and \emph{action chunks} are naturally interpreted as temporally extended actions: they specify an initiation rule, an internal policy over a short horizon, and a termination condition. This is the classical option/SMDP connection~\cite{Sutton1999Options}.
\end{itemize}
\noindent

\subsection{Concluding Remark: A Narrow-Waist Abstraction for Robotic Agent Graphs}
Taken together with the decision-theoretic timing view in Appendix~\ref{app:formulation_related}, the lineage above motivates a narrow-waist abstraction for robotic agent computation graphs; Retriever can be viewed as a synthesis of:
\begin{itemize}
    \item From UNIX/POSIX: uniform interfaces and portable implementation~\cite{Ritchie1974UNIX, POSIX1988}.
    \item From KPN/FRP: explicit stream transduction and time semantics~\cite{Kahn1974Semantics, Nilsson2002FRP}.
    \item From Actors/Erlang: isolation and message-passing scalability~\cite{Hewitt1973Actor, Armstrong2003Erlang}.
    \item From ML/Logs: graph-based optimization and replayable history~\cite{Abadi2016TensorFlow, Kreps2011Kafka}.
\end{itemize}

This combination addresses the specific gap in robotics: building complex, closed-loop agents that are semantically rigorous yet systems-practical.
The resulting \textit{retriever semantic contract} can be summarized as:
\begin{enumerate}
    \item \textit{time model}: All messages have explicit event time; operations are defined over event time.
    \item \textit{snapshot semantics}: Explicit rules for joining/aligning asynchronous inputs at decision time.
    \item \textit{backpressure \& policy}: Buffer sizes and drop policies are explicit, not implicit.
    \item \textit{deterministic replay}: Given the same trace + policy, execution yields the same action sequence.
    \item \textit{provenance}: Every output is traceable to its input snapshot and code version.
\end{enumerate}

\section{Extended Formulation: Asynchronous Environment--Agent Loop}
\label{app:formulation_ext}

This section formalizes the \asyncmodel introduced in Sec.~\ref{sec:formulation} using the same core concepts (streams, clocks, and explicit synchronization), but with more precise definitions for the appendix proofs.

\subsection{Types: Streams and Clocks}
In this appendix, $\mathbb{T}$ denotes continuous event time (wall-clock or sim-clock); discrete decision instants are represented as event timestamps generated by clocks/triggers.

\begin{defn}[Stream]
A stream of type $V$ is a partial function $x : \mathbb{T} \rightharpoonup V$, equivalently a set of tagged events $\{(t_i,v_i)\}$ with at most one value per tag in that stream~\cite{Lee1998MoC,Liu2005TaggedSignals}.
We sometimes write $x(t)=\bot$ as shorthand for $t\notin\mathrm{dom}(x)$.
We distinguish two subtypes (following standard tagged-signal and FRP terminology)~\cite{Elliott1997Fran,Wan2000FRP,Nilsson2002FRP}:
\begin{itemize}
    \item \textbf{Behavior} $b : \mathbb{T} \to V$: defined for all $t$ (e.g., continuous physical state).
    \item \textbf{Event Stream} $e : \mathbb{T} \rightharpoonup V$: defined only at a countable, locally finite set of timestamps $\mathrm{dom}(e)$.
\end{itemize}
\end{defn}

\begin{defn}[Clock]
A clock $\mathcal{C}$ is an event stream of unitless ticks: $c : \mathbb{T} \rightharpoonup \{\mathrm{tick}\}$.
Discrete clocked execution is standard in synchronous/dataflow models~\cite{Caspi1987LUSTRE,Lee1987SDF}.
\end{defn}

\subsection{Asynchronous Environment--Agent Loop}
Standard MDPs assume the environment waits for the agent ($s_t \to a_t \to s_{t+1}$). The \asyncmodel keeps the agent--environment split but removes the single shared step:
\begin{enumerate}
    \item \textbf{Environment:} A causal map $\mathcal{E}: \mathsf{Stream}(\mathcal{A}) \to \mathsf{Stream}(\mathcal{Z})$. It evolves continuously; state at $t+\Delta$ depends on state at $t$ and actions $a|_{[t, t+\Delta]}$.
    \item \textbf{Agent:} A Causal Stream Function $\pi: \mathsf{Stream}(\mathcal{Z}) \to \mathsf{Stream}(\mathcal{A})$.
\end{enumerate}
The crucial difference is that $\pi$ has non-zero latency. If $\pi$ takes computation time $\delta$, the action $a(t)$ cannot depend on observations later than $t-\delta$. Retriever operationalizes this constraint by forcing explicit \texttt{sync} policies that define exactly which past observation $z(t-\Delta)$ is consumed.
\paragraph{Decision-theoretic relation.}
See Appendix~\ref{app:formulation_related} for a brief decision-theoretic orientation to timing and asynchronous decision epochs (CTMDPs and (PO-)SMDPs). Retriever's \texttt{Clock} + \texttt{sync} policies specify decision epochs and state snapshots, while explicit compute latency constrains which past observations are admissible at each epoch.

\section{Theoretical Guarantees}
\label{app:proof}

This appendix provides formal justifications for two core theoretical claims of the Retriever framework: (1) the graph model is sufficient for the bounded-memory causal policies considered here (Representation Theorem), and (2) the execution model yields reproducible behavior under the fixed-trace conditions of the Functional Determinism theorem. The extended environment--agent loop formalization is provided in Appendix~\ref{app:formulation_ext}.

\paragraph{Causal Stream Function (CSF).}
We model each Flow as a deterministic \emph{Causal Stream Function} (CSF): it processes timestamped input streams through a local state update.
Concretely, a CSF is a tuple $(H, h_0, f)$ where $H$ is the internal state space, $h_0 \in H$ is the initial state, and $f$ is a deterministic step map.
At each Flow sample tick $t$ (generated by a Clock or Trigger), the Flow consumes an \emph{input snapshot} $x_t$ produced by edge \texttt{sync} policies from committed upstream histories, then updates and emits:
\[
(h_{t^+}, y_t) = f(h_{t^-}, x_t).
\]
This is the locally-synchronous contract: within one \texttt{step()}, the computation is sequential and deterministic; across steps, distinct Flows may have distinct clocks and communicate through timestamped streams.
The emitted value becomes visible at its declared commit timestamp; strict-prefix visibility is the default, while exact-tick visibility requires a deterministic, acyclic same-tick order.

\subsection{Representation Theorem: Sufficiency of the Primitives}
\label{app:proof_representation}

The Retriever composition model rests on the claim that \textit{stateless maps}, \textit{stateful scans}, and \textit{synchronization policies} are sufficient to construct the bounded-memory causal robot policies considered in this paper. This mirrors constructions in synchronous languages~\cite{Caspi1987LUSTRE} and functional reactive programming~\cite{Wan2000FRP}.

\begin{thm}[Representation Sufficiency; formal version of Thm.~\ref{thm:representability_main}]
Let $\pi$ be a deterministic bounded-memory causal policy whose outputs occur at locally finite decision epochs. At each epoch, its input is a finite aligned record materialized from committed observation histories by deterministic synchronization policies, and its state lies in a fixed state space $H$ with bounded representation size. There exists a Retriever graph $G$ composed of \texttt{Map} and \texttt{Scan} Flows, clocks, and deterministic \texttt{sync} edges such that $G$ implements $\pi$.
\end{thm}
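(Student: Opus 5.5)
The construction is direct: we read $\pi$'s own description (epochs, aligned inputs, state, step map) off as a small graph and then show by induction over decision epochs that the graph and $\pi$ emit the same action stream. First we normalize $\pi$. Because it is deterministic, has bounded memory, and acts at locally finite epochs $\{t_k\}$, it is given by a tuple $(H,h_0,g,o)$ together with a deterministic input map $\sigma$. Here $\sigma$ takes the committed observation histories to the aligned record $x_{t_k}$, and the step is
\[
h_{t_k^+}=g(h_{t_k^-},x_{t_k}),\qquad a_{t_k}=o(h_{t_k^-},x_{t_k}).
\]
Between epochs the state is held, so $h_{t_{k+1}^-}=h_{t_k^+}$. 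Every quantity here is by hypothesis a function of committed histories only. If $\pi$ is multi-rate, we decompose it into finitely many such loops $\pi^{(j)}$, each with its own epochs. Their internal couplings are themselves streams consumed through deterministic synchronization, and the strictly causal feedback hypothesis applies to these internal edges as well.

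Next we build $G$. For each loop we define a clock $\mathcal{C}^{(j)}$ with $\mathrm{supp}(\mathcal{C}^{(j)})=\{t^{(j)}_k\}$; this is a legitimate clock because the support is locally finite. The input record is realized by one \texttt{sync} edge per aligned field, using exactly the policy component of $\sigma$ (\texttt{Latest}, \texttt{Window}, or a general deterministic function of the committed prefix), and these fields are bundled with \texttt{Join}. We then add a \texttt{Scan} Flow with state space $H$, initial state $h_0$, and transition $g$, which emits the pair $(h_{t_k^-},x_{t_k})$ or simply its updated state. A \texttt{Map} Flow applying $o$ follows; alternatively we fuse $o$ into the Scan step so the output map is $f(h,x)=(g(h,x),o(h,x))$, which matches the local transition of Sec.~\ref{sec:formulation} exactly. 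Cross-loop signals become edges between the corresponding Scan Flows, and we declare each edge that closes a directed cycle as strict-prefix. Such a declaration is available precisely because $\pi$ reads only strictly earlier values along those couplings.

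Correctness follows by induction on the merged, locally finite sequence of all epochs across all loops, ordered by time with a fixed tie-breaking order for same-tick acyclic dependencies. The induction hypothesis is that up to (but excluding) epoch $t$, every stream in $G$ coincides with the corresponding stream of $\pi$, and every Scan state equals $\pi$'s state. At $t$, each sync edge reads only committed events before $t$, or same-tick events that come earlier in the declared order. By the hypothesis these prefixes agree, so the deterministic policies produce identical records. The Scan then computes the same $g$ and $o$, so the outputs and states agree through $t$. Local finiteness guarantees that every time is reached after finitely many steps, so agreement holds on all of $\mathbb{T}$. Since this holds for every observation history, $G$ implements $\pi$ as a map on streams.

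The main obstacle is well-definedness of feedback, meaning that the induction does not become circular. Same-tick events and cycles could in principle require a fixed point $x_t=f(x_t)$. The first step in handling this is to show that the dependency relation restricted to any single time tag is acyclic. Every cycle contains a strict-prefix edge, and deleting those edges leaves a DAG at each tag, which gives the topological order used in the induction. A secondary subtlety is that ``fixed-size memory'' must cover any history a \texttt{Window} policy needs. We resolve this by placing bounded buffers inside the sync policy, which the hypothesis already assumes finite, rather than inside $H$.
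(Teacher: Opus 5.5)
Your construction is correct and matches the paper's proof. One clock per local loop supplies the epochs, deterministic \texttt{sync} edges materialize the aligned record, a \texttt{Scan} Flow carries $H$ and the step map (with an optional or fused \texttt{Map} for the action), and every cross-loop cycle is closed through a strict-prefix edge. The only difference is that you verify agreement explicitly by induction over the merged locally finite epoch sequence with a same-tick topological order, whereas the paper just appeals to closure of deterministic causal maps, which is essentially the induction of Appendix~\ref{app:proof_determinism}. One small correction: a \texttt{Scan} that emits only $h_{t_k^+}$ would not let a downstream \texttt{Map} recover $o(h_{t_k^-},x_{t_k})$, so emit the pair or use the fused step.
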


\begin{IEEEproof}[Proof Sketch]
At decision epoch $t_k$, write the policy transition directly as
\[
    (h_k,a_k)=g(h_{k-1},z_k),
\]
where $z_k$ is the aligned input record and $h_k\in H$. A clock supplies the epochs $\{t_k\}$; deterministic \texttt{sync} edges materialize each $z_k$ from the committed input histories. A \texttt{Scan} Flow stores $h_{k-1}$ and evaluates $g$, while an optional \texttt{Map} projects or transforms the emitted action. This construction permits continuous-valued fixed-size state and does not require finite cardinality.

For a multi-rate decomposition $\pi_1,\dots,\pi_n$, apply the same construction to each local clock $\mathcal{C}_i$ and connect the resulting Flows by the policy's deterministic \texttt{sync} edges:
\begin{itemize}
    \item \emph{Bounded-size memory.} The joint state space is $H=H_1\times\cdots\times H_n$. The graph contains finitely many sub-policies and each local representation size is fixed, so the composite representation size is bounded even when state values are continuous.
    \item \emph{Causality.} Each $\pi_i$ consumes only its declared committed history through \texttt{sync}; every feedback cycle contains a strict edge or delay, so no instantaneous loop is formed.
    \item \emph{Closure.} A composition of deterministic causal maps is itself a deterministic causal map.
\end{itemize}
Hence the union graph implements the same local transitions and action emissions as $\pi$. No primitive beyond clocked \texttt{Scan}/\texttt{Map} computation and deterministic \texttt{sync} is required.
\end{IEEEproof}

\subsection{Functional Determinism: Trace Reproducibility}
\label{app:proof_determinism}

Here we formalize the claim that Retriever programs are \textit{functionally deterministic}: given fixed committed input histories, clock/commit traces, and initial states, the sequence of internal states and output commands is identical across runs. This property relates to Kahn Process Networks (KPN)~\cite{Kahn1974Semantics} and Synchronous Dataflow (SDF)~\cite{Lee1987SDF}, but extends their replay viewpoint to explicit real-time policies.

\begin{defn}[Event Stream]
An event stream $S = \{(t_i, v_i)\}_{i=1}^N$ is a locally finite sequence of time-value pairs, strictly increasing in $t_i$ within that stream.
\end{defn}

\begin{defn}[Trace Determinism]
A system $\mathcal{S}$ is \textbf{trace deterministic} if, for any two runs with identical committed input histories, clock/commit traces, and initial states, the output streams are identical.
\end{defn}

\paragraph{Strict causality (delay on cycles).}
We assume cycles do not form instantaneous algebraic loops, which would impose a fixed-point constraint $x_t = f(x_t)$ rather than an executable step: any dependency around a directed cycle must pass through at least one discrete delay, so values at time $t$ depend only on events strictly earlier than $t$.
Operationally, this can be realized by state in a \texttt{Scan} Flow (which advances only on ticks), or by an explicit lag/buffer policy that guarantees the consumed value comes from event time $<t$.
This is the standard condition used to ensure a cyclic graph can be unrolled into a DAG over logical time~\cite{Lee1987SDF}.

\begin{thm}[Functional Determinism of Retriever Graphs; formalizing Thm.~\ref{thm:functional_determinism_main}]
Let $G = (V, E)$ be a directed graph of Flows (possibly cyclic). If:
\begin{enumerate}
    \item Every node $F \in V$ is a deterministic Causal Stream Function (CSF).
    \item Every edge $e \in E$ has a deterministic \texttt{sync} policy.
    \item Sample ticks, output commit timestamps, and lag decisions are fixed by deterministic policies or a replay log.
    \item Committed external-input histories and initial Flow states are fixed.
    \item \textbf{Strict Causality:} Every cycle in $G$ contains at least one delay, so that any value consumed at time $t$ depends only on events with timestamps $<t$.
    \item \textbf{Exact-tick order:} Any policy that exposes events committed at the consumer's tick declares an acyclic deterministic same-tick order; otherwise it uses strict-prefix visibility.
\end{enumerate}
Then the entire graph $G$ is trace deterministic.
\end{thm}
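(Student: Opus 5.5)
The plan is an induction over the discrete, locally finite set of events fixed by the replay conditions. By hypothesis~3, the set of all sample ticks $\{t^v_k\}$ and commit times $\{c^v_k\}$ across all Flows $v\in V$ is fixed, and so are the lag decisions. Each clock has locally finite support and the graph is finite, so the union $\mathcal{T}$ of these instants is locally finite. It can therefore be enumerated in increasing order $\tau_1<\tau_2<\cdots$, with no accumulation point in any bounded interval. External-input events have locally finite support too, and by hypothesis~4 they are fixed committed histories. The goal is to show, for any two runs $R,R'$ meeting the hypotheses, that the committed histories on every edge agree on $[0,\tau_n]$ for every $n$, and that every Flow's state after its last tick $\le \tau_n$ agrees. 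Letting $n\to\infty$ then gives identical output streams on all of $\mathbb{T}$.

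The inductive step at $\tau_n$ splits into strict-prefix consumers and same-tick consumers. First take a Flow $v$ that ticks at $\tau_n$ and whose edges use strict-prefix visibility. By the induction hypothesis, each incoming edge history committed before $\tau_n$ is identical in $R$ and $R'$. Here hypothesis~5 is essential: on a cycle, the delay guarantees that no value consumed at $\tau_n$ depends on an event at or after $\tau_n$. So $\mathrm{sync}$, which is deterministic by hypothesis~2, yields the same per-edge values, and $\mathrm{Join}$ yields the same snapshot $x_{\tau_n}$. The pre-tick state $h_{\tau_n^-}$ also agrees: it is either the state left by $v$'s previous tick (induction) or $h_0$ (hypothesis~4). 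Determinism of $f$ (hypothesis~1) then gives equal $(h_{\tau_n^+},y_{\tau_n})$. The output is committed at the fixed time $c$. If the commit lands at a later instant, that is simply an event to be accounted for at that later index, and its value is already determined.

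Same-tick consumers are handled by hypothesis~6. The Flows that tick at $\tau_n$, together with the exact-tick edges among them, form a DAG with a declared deterministic order. Fix a topological order of this DAG and do a secondary induction along it. When a Flow is processed, its same-tick inputs come from predecessors already shown equal across runs, and its other inputs are covered by the strict-prefix argument above. Because the result is forced by the declared order, it does not depend on which scheduler interleaving actually produced it.

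Stochastic calls are treated as an extra fixed input stream of logged seeds or outputs, as in the Stochasticity remark. The main obstacle is well-foundedness: making precise that local finiteness together with strict causality rules out Zeno accumulation and circular dependencies at a single instant. For this I would use the standard unrolling argument cited before the theorem. Cycles are unrolled into a DAG over the pairs $(\text{node},\tau_n)$, ordered lexicographically by time and then by same-tick topological order. Hypotheses~5 and~6 make this ordering well-founded, and functional determinism of the graph is then determinacy of a well-founded recursive definition.
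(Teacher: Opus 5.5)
Your proposal is correct and follows essentially the same route as the paper: an induction along the causal order of the time-unrolled graph (a DAG by strict causality), refined within each instant by the declared acyclic same-tick order, with determinism of \texttt{sync} and of each Flow's step map carrying agreement from earlier events to the next. You are more explicit than the paper about three points: local finiteness of the tick/commit set (which rules out Zeno accumulation), delayed commits, and the secondary topological induction within a single instant; the structure of the argument is otherwise the same.
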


\begin{IEEEproof}
While $G$ may contain cycles, strict causality implies that the \textit{unrolled computation graph} over time is a DAG (i.e., we cannot have instantaneous algebraic loops $x_t = f(x_t)$). This mirrors the ``strict causality'' requirement in discrete event systems~\cite{Lee1987SDF}.
We proceed by induction over this causal order, refined by the declared same-tick order when exact-tick visibility is enabled.

\textbf{Base Case ($k=0$):} Initial states $h_0$ and the first external stream events are fixed/deterministic.

\textbf{Inductive Step:} Assume all internal states and stream values are deterministic for all events up to index $k-1$.
Consider the $k$-th event occurring at flow $F$ at time $t$.
\begin{itemize}
    \item Its inputs are derived from \texttt{sync} applied to the committed visible history. By strict causality and the acyclic same-tick rule, these inputs depend only on values earlier in the causal order.
    \item By the inductive hypothesis, these past values are identical across runs.
    \item Since $F$ and \texttt{sync} are deterministic functions, the output at event $k$ is uniquely determined.
\end{itemize}
Thus, by induction on the causal order, the entire infinite trace is unique.
\end{IEEEproof}

\subsection{Remark on Stochastic Policies and Environments}
\label{app:proof_stochastic}
The results above are stated for deterministic components, conditioned on explicit randomness streams. This conditioning covers the stochastic settings common in learned robot systems.

\textbf{Stochastic policies.}
A policy that samples its action---a Gaussian or categorical policy in RL, temperature-based decoding in a VLM, or exploration noise---factors into a deterministic sampler plus exogenous random bits: $a_k \sim \pi(\cdot \mid z_k)$ is realized as $a_k=f(z_k,\epsilon_k)$, or equivalently by a seeded pseudorandom sampler.
In Retriever, the noise source is materialized as an explicit event stream (seeds or realized draws) feeding the sampling Flow, so the graph remains a network of deterministic CSFs.
Theorem~\ref{thm:representability_main} then applies to the policy conditioned on the noise stream, and Theorem~\ref{thm:functional_determinism_main} holds with the logged noise stream included among the external inputs---which is precisely the logging-and-replay contract of Appendix~\ref{app:framework_ext}.
This recovers the standard decision-theoretic view that a stochastic policy is a deterministic policy over an input extended with exogenous noise.

\textbf{Stochastic environments.}
The environment enters every guarantee only through the realized timestamped observation trace: stochastic dynamics, sensor noise, and random event timing merely determine the distribution over traces, while each theorem is a per-trace statement.
No modification is needed; distributional claims (e.g., expected task success) follow by integrating the per-trace guarantees over the environment's trace distribution.

\textbf{Numerical vs.\ event-level determinism.}
Trace determinism concerns the \emph{event path}: which ticks fire, which input snapshots are consumed, and which values are emitted, given the external input streams.
It does not promise bitwise-identical floating point across live runs: GPU kernels may reduce in nondeterministic order, exactly as in deep-learning frameworks.
Automatic differentiation still differentiates the \emph{realized} computation, so forward values and gradients remain consistent.
The path-gradient experiment in Sec.~\ref{sec:eval} therefore requires only a fixed event path.
Retriever fixes the discrete event structure that scheduling races would otherwise change.
Within-kernel numerics are orthogonal; deterministic kernels can recover bitwise reproducibility when needed.
Floating-point drift in a live run could still flip a downstream threshold event.
When the relevant consumed values are logged, replay prevents this divergence by reading them from the log.
\emph{Value-level} nondeterminism from hosted model APIs is different: outputs that vary for the same prompt must be logged and replayed as values (Sec.~\ref{sec:framework}).

\subsection{Remark on Real-Time Constraints}
Arrival-driven callbacks can let network jitter change which value a step reads.
Retriever instead makes timestamp, freshness, and deadline rules explicit in \texttt{sync}.
Under the fixed replay conditions of Thm.~\ref{thm:functional_determinism_main}, late data has a declared stale, missing, dropped, or timeout outcome.
This separates the program's \textit{denotational semantics} from its \textit{operational execution}, a core tenet of synchronous languages~\cite{Caspi1987LUSTRE}.

\section{Retriever Runtime Implementation}
\label{app:framework_ext}
This appendix provides API examples and operational details omitted from Sec.~\ref{sec:framework}.
It focuses on \emph{what the user writes}, \emph{what the runtime guarantees}, and \emph{what gets logged for replay}.

\subsection{Runtime Design (Flow / IR / Runtime Layers)}
Sec.~\ref{sec:framework} gives the authoring $\rightarrow$ IR $\rightarrow$ runtime path.
Here, a user-authored Pipeline compiles to a static, typed IR that fixes topology, clocks, edge policies, and backend adapters before execution.

\paragraph{IR example (serialized contract).}
The IR is an explicit, typed, serializable graph object: it fixes node types, clocks, and edge adapters, and can be saved/loaded as a deployment artifact (no Python source required at runtime). A schematic excerpt is:
\begin{lstlisting}[style=retriever, language=Python]
ir = pipe.build_ir()          # validates types, clocks, adapters
ir.save("agent.ir.json")      # deployable artifact

# Schematic excerpt of ir.to_json() (fields omitted):
{
  "version": "1.0.0",
  "metadata": {"name": "Agent", "validated": True},
  "nodes": [
    {"id": "cam", "type": "CameraSource",
     "config": {"clock": {"Rate": {"hz": 30}}},
     "outputs": {"frame": "Image"}}
  ],
  "edges": [
    {"source": {"node": "cam", "port": "frame"},
     "destination": {"node": "skill", "port": "frame"},
     "adapter": {"Latest": {"staleness": 0.1}},
     "qsize": 10}
  ],
  "topology": {"has_cycle": False}
}
\end{lstlisting}
\noindent
In practice, the full IR includes port-level input/output types, queue/backpressure settings, and the computed topology metadata; it underpins visualization (\texttt{IR.visualize()}) and backend compilation (\texttt{IR.compile()}).

\paragraph{Scheduling, backpressure, and missed rates.}
Real pipelines may miss a requested rate under load, for example during slow VLM calls or heavy perception.
Each Flow declares its desired clock, so the runtime can report sustained misses and either terminate (hard real-time), tolerate a bounded number before escalation (soft real-time), or warn and continue (best-effort).
These policies configure the runtime; they do not change program semantics.

\subsection{Flow Interface and Run Clocks}
A \texttt{Flow} is a synchronous state machine stepped by an explicit \textit{run clock}. Users implement (i) state initialization and (ii) a pure step on aligned inputs:
\begin{lstlisting}[style=retriever, language=Python]
class BeliefMemoryFlow(Flow[(Obs, Event), Belief]):
    def reset(self):
        self.belief = Belief()

    def step(self, inp):
        obs, event = inp
        if event == "inspection_done":
            self.belief = update_belief(self.belief, obs)
        return self.belief

belief = BeliefMemoryFlow() @Trigger("inspection_done")
\end{lstlisting}

\subsection{Edgewise Synchronization Policies (How Inputs Are Sampled)}
The formal \texttt{sync} contract and concrete \texttt{Latest}/\texttt{Window} examples are in Sec.~\ref{subsec:framework_sync}. The corresponding user-facing wiring is simply an edge declaration:
\begin{lstlisting}[style=retriever, language=Python]
pipe = Pipeline("Agent")
with pipe:
    cam.then(vla,    sync=Latest())\
       .then(ctrl,   sync=Latest())  # controller plays ActionChunk over fast ticks
    cam.then(belief, sync=Latest())\
       .then(monitor,sync=Latest())
\end{lstlisting}

\paragraph{Why This Matters (Pub/Sub Failure Mode)}
In pub/sub systems, input alignment is often an implicit ``latest message'' rule that can vary across runs.
Retriever makes each sampled snapshot explicit through \texttt{sync}.
A complete replay record includes the consumed input IDs and the clock/commit trace required by Thm.~\ref{thm:functional_determinism_main}.

\subsection{Worked Example: Case Study Wiring and Monitor-Mediated Plan Updates}
\begin{figure*}[t]
    \centering
    \includegraphics[width=0.9\linewidth]{figures/system/pipeline_v4.png}
    \caption{\textbf{Representative closed-loop hierarchical manipulation pipeline (appendix view).} The same case-study graph from Sec.~\ref{sec:case_study} is shown with the messages that matter for implementation review: predicate-level belief, \texttt{PlanChunk} proposals, one active \texttt{SkillCmd}, \texttt{ActionChunk}s, progress predictions, and terminal events.}
    \label{fig:app_case_study_pipeline}
    \vspace{-1.2em}
\end{figure*}
The case-study agent (Sec.~\ref{sec:case_study}; Fig.~\ref{fig:app_case_study_pipeline}) combines slow planning and belief updates with fast skill and control loops.
At the appendix level, the planner proposes bounded-horizon \texttt{PlanChunk}s, the monitor commits them at handoff boundaries, and the controller consumes \texttt{ActionChunk}s on a faster clock.
Appendix~\ref{app:case_study_ext} records the message interfaces.

\subsection{Temporal Chunking (Plan and Action)}
Figure~\ref{fig:temporal_chunking} introduces the shared chunking pattern.
An \texttt{ActionChunk} buffers VLA output for 200Hz control, while a \texttt{PlanChunk} lets the monitor overlap replanning with the current skill.
Only action chunks are generic buffering adapters; plan chunks remain part of the execution-monitor policy in Appendix~\ref{app:case_study_ext}.

\subsection{Runtime Modes: Debug vs Deployment}
Retriever compiles a user graph to a static IR of nodes, edges, clocks, and policies, then runs it on a supported backend:
\begin{itemize}[leftmargin=*]
    \item \textbf{In-process stepping:} deterministic, debugger-friendly execution (\texttt{pipe.step}) for development and trace inspection.
    \item \textbf{Asynchronous execution:} actors/processes for deployment (\texttt{pipe.run}), where each Flow executes sequentially on its clock and edges implement deterministic policies.
\end{itemize}

\subsection{Logging for Deterministic Replay}
Exact replay requires logging each step's event time, Flow tick ID, \emph{consumed input IDs}, clock/commit decisions, and stochastic seeds or outputs.
These records identify the snapshot selected by each \texttt{sync} policy and allow offline re-execution (Sec.~\ref{sec:eval}, Appendix~\ref{app:eval_ext}).

\subsection{Pipeline Interface Summary}
\label{app:code_examples}

The released implementation follows the same six-Flow decomposition as Sec.~\ref{sec:case_study}. We summarize the interface contract here rather than reproducing source-level code; the concrete class names and a runnable pipeline are available in the released repository (\url{https://github.com/openretriever/retriever}).

Appendix~\ref{app:case_study_ext} gives the concrete per-Flow interfaces and trigger chain. The framework-level takeaway is only that all messages are typed, timestamped records flowing through explicit clocks and \texttt{sync} policies.

\subsection{VLM Prompting Details}
\label{app:prompts}

To ensure the VLM planner emits plans that the monitor can execute, we constrain the prompt around three ingredients: a finite object/affordance vocabulary, predicate-style belief facts, and a closed skill vocabulary with cleanup/termination rules. These prompt constraints produce bounded-horizon skill programs whose branch conditions can be resolved by the monitor from the current belief.

\paragraph{Belief updater prompt.}
The belief updater similarly maps observations plus the previous belief to predicate-valued facts. We use a three-valued style (\texttt{yes/no/unknown}) and avoid overwriting known facts with \texttt{unknown} unless the task evidence warrants it.

\paragraph{Illustrative prompt contract (drawer task).}
The important interface is the typed belief and plan contract the prompt induces, not its exact wording.
In the drawer task, the belief updater can be viewed schematically as maintaining three-valued predicates over objects and containers:
\begin{lstlisting}[style=retriever, language=Python, numbers=none, basicstyle=\ttfamily\scriptsize, xleftmargin=4pt, framexleftmargin=4pt, aboveskip=0.4em, belowskip=0.4em]
contains(top_left_drawer, spice)  in {yes, no, unknown}
contains(top_right_drawer, spice) in {yes, no, unknown}
holding(spice)                    in {yes, no, unknown}
seasoned(food)                    in {yes, no, unknown}
\end{lstlisting}
The planner consumes a goal and this compact belief state, then emits a bounded \texttt{PlanChunk} over the closed skill vocabulary. A simplified example is:
\begin{lstlisting}[style=retriever, language=Python, numbers=none, basicstyle=\ttfamily\scriptsize, xleftmargin=4pt, framexleftmargin=4pt, aboveskip=0.4em, belowskip=0.4em]
Goal: season(food) with spice
Belief: contains(top_left_drawer, spice) = unknown

PlanChunk:
  OpenDrawer(top_left_handle)
  InspectDrawer(top_left_drawer)
  IF contains(top_left_drawer, spice) = yes:
      Pick(spice)
      Season(food)
  ELSE:
      CloseDrawer(top_left_handle)
      OpenDrawer(top_right_handle)
      InspectDrawer(top_right_drawer)
\end{lstlisting}
This example illustrates the belief-space predicate interface used by the monitor.

\section{Case Study Details}
\label{app:case_study_ext}
This appendix records case-study specific interfaces and conventions referenced in Sec.~\ref{sec:case_study} (belief representation, \texttt{PlanChunk} structure, skill catalog, teleoperation schema, and task-progress scoring). It complements Appendix~\ref{app:framework_ext} by focusing on concrete messages and task-level conventions rather than generic runtime semantics.
We use compact aliases F1--F6 below only as appendix shorthand: F1=$F_{\texttt{cam}}$, F2=$F_{\texttt{belief}}$, F3=$F_{\texttt{plan}}$, F4=$F_{\texttt{exe}}$, F5=$F_{\texttt{skill}}$, and F6=$F_{\texttt{ctrl}}$.

\paragraph{Flow Specifications (Inputs/Outputs/Triggers)}
\begin{itemize}
    \item \textbf{F1 Observation ($F_{\texttt{cam}}$)}: emits timestamped observations (camera + optional robot state) at 30Hz (or sensor-driven).
    \item \textbf{F2 Belief/Memory ($F_{\texttt{belief}}$)}: updates only on inspection completion (\texttt{inspection\_done}). Tracks a compact task state (e.g., drawer \texttt{\{unknown, checked, target\_seen\}} and object \texttt{\{unknown\_location, in\_drawer(d), in\_gripper, done\}}). Emits \texttt{belief\_updated} (and optional \texttt{belief\_delta}).
    \item \textbf{F3 Planning VLM ($F_{\texttt{plan}}$)}: triggered by the monitor on \texttt{belief\_updated}, failures, or when the remaining plan horizon is low. Inputs: belief summary, goal, and execution context. Output: a bounded-horizon \texttt{PlanChunk} of up to five skills, often a small conditional tree whose branch conditions are predicates over belief/evidence. In our implementation, this component is an external multimodal planner service wrapped as a Retriever Flow.
    \item \textbf{F4 Execution Monitor ($F_{\texttt{exe}}$)}: stores the active \texttt{PlanChunk} subtree, the current node pointer, latest belief, and current skill state. Triggered by \texttt{skill\_done/skill\_failed}, \texttt{inspection\_done}, \texttt{PlanChunk} proposals, and progress-prediction events; optional 5--10Hz heartbeat for timeouts and horizon checks. Outputs the active \texttt{SkillCmd} to F5 and replan triggers to F3.
    \item \textbf{F5 Skill VLA ($F_{\texttt{skill}}$)}: executes the active \texttt{SkillCmd} nominally at 2Hz and outputs \texttt{ActionChunk}s to F6 plus progress prediction to F4. Inspection skills additionally emit \texttt{inspection\_done}.
    \item \textbf{F6 Controller ($F_{\texttt{ctrl}}$)}: consumes \texttt{ActionChunk}s and runs a safety-critical 200Hz control loop.
\end{itemize}

\paragraph{Low-level policy data efficiency.}
Across the drawer and bag-retrieval tasks, the shared low-level policy uses roughly \textbf{200--300 complete expert episodes total}.
Each episode lasts roughly \textbf{1--3 minutes}, yielding only a \textbf{few hours of robot data}.
The dataset covers the closed skill vocabulary rather than separate large per-skill datasets.

\paragraph{Low-level VLA training details ($\pi_{0.5}$ fine-tuning or multi-task diffusion).}
We found two training routes sufficient for $F_{\texttt{skill}}$: (i) fine-tune the $\pi_{0.5}$ VLA model~\cite{pi05} for the closed skill vocabulary, or (ii) train a multi-task diffusion policy from scratch on the same dataset.
Both routes condition the shared policy on the current \texttt{SkillCmd} (skill name + arguments) and use the union of skill-labeled expert trajectories.

Skills such as \texttt{OpenDrawer}, \texttt{InspectDrawer}, \texttt{CloseDrawer}, \texttt{Pick}, and \texttt{PlaceBack} share the same observation and action interfaces.
Because both training routes support this skill set, the system pipeline is not tied to one low-level learning recipe.

\paragraph{Monitor-Mediated Plan Updates (Plan Chunking in the Monitor)}
In this case study, \emph{plan chunking lives inside $F_{\texttt{exe}}$}, not in an edge policy.
The monitor never interrupts the active skill.
A new \texttt{PlanChunk} can replace only the remaining suffix, allowing replanning to overlap execution.
Belief updates, failures, and a short remaining horizon trigger new proposals.
\paragraph{What the planner emits (skills, not actions)}
\texttt{PlanChunk} is a short bounded-horizon \emph{skill program} over a closed skill vocabulary (Sec.~\ref{sec:case_study}).
Each node is a parameterized macro-action such as \texttt{OpenDrawer(handle)}, \texttt{InspectDrawer(container)}, \texttt{Pick(object)}, \texttt{PlaceBack(object)}, or \texttt{CloseDrawer(handle)}.
The planner does not emit low-level control.
The VLA policy consumes one active \texttt{SkillCmd} at a time and handles within-skill control through \texttt{ActionChunk}s; the monitor handles skill handoff.
In practice, the \texttt{PlanChunk} places branch points at information-gathering skills (e.g., inspection) so that belief updates can resolve partial observability online. A representative chunk for the drawer task is:
\begin{lstlisting}[style=retriever, language=Python]
# Example PlanChunk (drawer task, schematic)
OpenDrawer(top_left_handle) -> InspectDrawer(top_left_drawer)
IF contains(top_left_drawer, spice) = yes THEN
    Pick(spice) -> Season(food) -> PlaceBack(spice)
ELSE
    CloseDrawer(top_left_handle)
\end{lstlisting}
$F_{\texttt{exe}}$ resolves the IF/THEN/ELSE using the latest belief snapshot and issues the next concrete \texttt{SkillCmd} to $F_{\texttt{skill}}$ at explicit handoff points.

\paragraph{Conditional Execution (Belief-Space IF/THEN/ELSE)}
To handle partial observability, the planner outputs short programs that can branch after inspections, e.g.,
\texttt{IF contains(drawer, target) = yes THEN pick(target) ELSE close-drawer(drawer)}.
$F_{\texttt{exe}}$ resolves these branches at runtime using the latest belief snapshot (which is refreshed by inspection-triggered belief updates). This keeps the VLA interface simple: $F_{\texttt{skill}}$ always receives a concrete \texttt{SkillCmd}, never a conditional.

\paragraph{Progress Prediction for Handoff}
The skill output includes a progress prediction for monitor-mediated handoff.
The signal is timestamped, synchronized, and logged like other streams; the predictor and scoring rule are implementation choices behind this interface.

\paragraph{Planner Prompting (Object Grounding \& Completion Rules)}
In practice, VLM planners can hallucinate objects or omit critical cleanup steps. We use prompt constraints that (i) define the complete object set (four drawers + known spice objects), and (ii) enforce task-completion rules (e.g., always close drawers; always terminate after seasoning). These constraints improve reliability without changing the runtime.

\paragraph{Rates and Time-Semantics (Where \texttt{sync} Applies)}
The case study uses 30Hz cameras, a nominally 2Hz VLA, and a 200Hz controller.
At each VLA tick, \texttt{sync} samples the perception and state streams.
Progress-prediction events trigger the monitor, while the controller consumes the latest valid \texttt{ActionChunk} on its own clock.

\paragraph{Teleoperation for HIL \& Data Collection}
Teleop is an optional input Flow for intervention, recovery, and data collection.
After a failure, the monitor can request another inspection or replan, or it can accept a temporary teleop override.
Teleop actions are logged alongside the event stream for imitation data and post-hoc analysis.

\paragraph{Concrete Interfaces (Messages)}
The pipeline logs observations, \texttt{SkillCmd}s, \texttt{PlanChunk}s, \texttt{ActionChunk}s, progress predictions, and terminal events as typed messages.
The monitor sends one concrete \texttt{SkillCmd} to the VLA and receives progress predictions; the controller consumes \texttt{ActionChunk}s on its own clock.

\paragraph{Skill Catalog and Object Affordances}
In the drawer task we separate the drawer \emph{container} from its \emph{handle} so skills can be parameterized by the correct affordance. For readability, examples use explicit names such as \texttt{top\_left\_drawer} and \texttt{top\_left\_handle} rather than short drawer codes. A representative skill library includes:
\texttt{OpenDrawer(top\_left\_handle)}, \texttt{InspectDrawer(top\_left\_drawer)}, \texttt{CloseDrawer(top\_left\_handle)}, \texttt{Pick(spice)}, \texttt{Season(food)}, and \texttt{PlaceBack(spice)} (plus recovery variants).
The bag task uses an analogous set of skills (e.g., \texttt{OpenBag}, \texttt{InspectBag}, \texttt{Pick(target)}, \texttt{Extract(target)}), but the same monitor/planner interfaces.

\paragraph{Teleoperation Logging}
Teleop can be used both as a safety valve and as a data source. For each intervention, the log records the intervention window and overridden skill/action. Under the replay contract above, recording the consumed input identifiers also pairs each teleop action with the policy/monitor snapshot needed for imitation learning and debugging.

\paragraph{Task Progress and Projected Time}
We report normalized task progress $p\in[0,1]$ from the execution monitor's task milestones and projected total time $\widehat{T}=T_{\mathrm{best}}/p$ (Sec.~\ref{sec:eval}). The projection is a descriptive linear normalization, not a measured completion time.
Drawer milestones cover the eight-skill trace in Sec.~\ref{sec:eval}; bag retrieval uses the four-stage sequence \texttt{OpenBag}, \texttt{InspectBag}, \texttt{Pick}, and \texttt{Extract}. Thus, a run that reaches approximately the end of the first skill receives $1/8$ drawer progress or $1/4$ bag progress. More complete runs use the monitor's finer normalized score recorded at termination.
This metric distinguishes partial progress from total failure while retaining the raw achieved-progress value beside the projected time.

\paragraph{Mock-to-Robot Portability}
The belief updater, planner, monitor, and goal interface stay unchanged between mock and robot deployments.
Only perception and action Flows are swapped, such as an image-folder camera for real cameras and a mock policy for the VLA and controller.
This lets most pipeline and prompt bugs be debugged before hardware runs.

\paragraph{Latency Profile (Typical)}
The planner has seconds-level model latency, while belief updates are faster and the monitor is a state machine.
Monitor-mediated handoff and action chunking keep this slow inference off the 200Hz control loop.

\section{Experiment Details}
\label{app:eval_ext}
\paragraph{Reproducibility.}
The runtime is released as \texttt{retriever-core} (Python~3.11+; \texttt{pip install retriever-core}), and the latency study uses \texttt{dora-rs}~$\geq$~0.3.12.
Recorded node streams can be stored in MCAP, optionally visualized with Rerun, and replayed as graph inputs.
Exact trace replay also requires the consumed snapshots and clock/commit trace in Thm.~\ref{thm:functional_determinism_main}.
Each experiment documents its hardware, software versions, execution flags, and trial protocol.

\begin{figure*}[t]
    \centering
    \begin{minipage}[t]{0.32\linewidth}
        \centering
        \includegraphics[width=\linewidth]{figures/results/combined-latency-results-new.png}
        \vspace{-0.5em}
    \end{minipage}\hfill
    \begin{minipage}[t]{0.30\linewidth}
        \centering
        \includegraphics[width=\linewidth]{figures/illustration/diff_physics_viz.pdf}
        \vspace{-0.5em}
    \end{minipage}\hfill
    \begin{minipage}[t]{0.32\linewidth}
        \centering
        \includegraphics[width=\linewidth]{figures/illustration/gradient_histogram.pdf}
        \vspace{-0.5em}
    \end{minipage}
    \vspace{-1.0em}
    \caption{\textbf{Efficiency and determinism visualizations (extended version of the main-text figure).} (a) End-to-end message latency versus payload size (log--log). This benchmark isolates transport and runtime overhead for a minimal producer-consumer pair, repeated over payload sizes and trials. (b) Hybrid differentiable physics example used to stress trace sensitivity under asynchronous semantics. (c) Distribution of \emph{path gradients} across repeated runs: event-time semantics yield a single trace and a single gradient, while arrival-time semantics can produce different traces (due to jitter-induced staleness) and hence inconsistent gradients.}
    \label{fig:app_results_benchmark}
    \vspace{-1.2em}
\end{figure*}

\paragraph{Message Latency Benchmark (Dora Backend)}
We measure end-to-end producer-to-consumer latency across payload sizes (Fig.~\ref{fig:app_results_benchmark}(a)); error bars show variation across trials.
The comparison includes Retriever in one process, across processes, and with Dora, alongside native \texttt{dora-rs} and ROS 2 C++/Python baselines.
The ROS 2 C++ components curve uses single-process composable nodes~\cite{Macenski2023ROS2Composition}, so it should be compared with \texttt{Retriever} single-process rather than multi-process ROS 2.
Retriever with Dora closely matches native \texttt{dora-rs}, while single-process Retriever gives the lowest small-message latency among the measured stacks.
These results suggest that the graph IR, clocks, and policy layer add little overhead relative to transport in this benchmark.

\paragraph{Functional Determinism Experiment (Hybrid Bouncing Ball)}
We compare event-time semantics (Retriever) with arrival-time semantics (pub/sub) on a hybrid system with continuous dynamics and discrete impacts.
The learnable parameter is initial velocity $\theta$, and we report the \textit{path gradient} $\nabla_\theta L$ for each run rather than an expectation gradient.

\paragraph{Why a Hybrid Physics Example? (Path vs.\ Expectation Gradients)}
Hybrid dynamics make the executed computation graph sensitive to the impact tick.
A one-tick shift can change the differentiated path, making this a direct test of runtime semantics.
We study \emph{path gradients} of the realized trace:
\[
g_{\text{path}} = \nabla_\theta L(\tau_{\text{run}}).
\]
If arrival-time staleness changes $\tau_{\text{run}}$ under identical initial conditions, the backward pass yields inconsistent gradients.
Expectation gradients over scheduling randomness require a different stochastic treatment and are outside our scope.

\paragraph{Hybrid System}
Free-flight dynamics with gravity:
\[
v_{t+1} = v_t - g\,dt,\qquad x_{t+1} = x_t + v_{t+1}\,dt
\]
Impact guard ($x<0$): $x \leftarrow 0$, $v \leftarrow -e\,v$, with restitution $e\in(0,1)$. Loss: $L=(x_T-x_{\text{target}})^2$.

\paragraph{Executors}
\textbf{Retriever (event-time):} fixed logical clock $t=0,\dots,T-1$, deterministic edge policies, fixed topological execution order.
\textbf{Pub/Sub (arrival-time):} a generic latest-arrived baseline; random jitter yields stale reads that can shift impact ticks. This is not a live ROS 2 or Dora determinism measurement.

\paragraph{Standard Configuration}
Horizon $T=200$, $\theta=3.0$, restitution $e=0.8$, time step $dt=0.01$, jitter probability $p=0.2$, $K=100$ runs.
\textit{Retriever}: 1 unique trace; 1 unique gradient; $\sigma_g \approx 3.5\times10^{-18}$.
\textit{Pub/Sub}: 100 unique traces; 35 unique gradients; $\sigma_g \approx 0.177$; 99\% trace mismatch.

\paragraph{High-Jitter Configuration}
Horizon $T=150$, $\theta=4.0$, $e=0.85$, $dt=0.01$, $p=0.4$, $K=50$.
\textit{Retriever}: 1 unique trace; deterministic gradient.
\textit{Pub/Sub}: 50 unique traces; $\sigma_g \approx 0.110$; 98\% mismatch; gradient sign can flip.

\paragraph{Interpretation}
Arrival-time semantics can shift impact ticks through stale reads, changing both the hybrid trace and its gradient.
Event-time semantics yield identical traces and gradients across runs.
Figure~\ref{fig:app_results_benchmark}(c) shows the gradient distribution.

\subsection{Benchmark Implementation Footprint}
\label{app:dev_complexity_raw_code}
We report approximate user-script sizes for the latency benchmark in Fig.~\ref{fig:app_results_benchmark}(a).

\paragraph{Approximate size of user scripts}
\begin{center}
\begin{tabular}{l c}
\toprule
\textbf{Stack} & \textbf{User-script size} \\
\midrule
Retriever (single benchmark script) & 117 \\
Dora (publisher + subscriber scripts) & 32 + 53 \\
ROS2 (publisher + subscriber scripts) & 79 + 95 \\
\bottomrule
\end{tabular}
\end{center}
\noindent
The benchmark scripts all implement the same producer--consumer latency sweep: timestamp a payload at the producer, deliver it through the corresponding runtime/backend, record receive time at the consumer, and aggregate latency by payload size.

\textbf{Dora benchmark scripts (native Python API)}

% Source (released baseline): node_1.py
\begin{lstlisting}[style=retriever, language=Python, caption={Dora publisher (native Python API; excerpt).}]
#!/usr/bin/env python
# -*- coding: utf-8 -*-

import time

import numpy as np
import pyarrow as pa
from dora import Node

SIZES = [2**i for i in range(6, 25)]

node = Node()
pa.array([])  # pyarrow warm-up

for size in SIZES:
    for _ in range(0, 100):
        now = time.time()
        random_data = np.random.randint(1000, size=size, dtype=np.uint64)
        random_data[0] = time.perf_counter_ns()

        node.send_output("latency", pa.array(random_data))
        time.sleep(max(0, 0.05 - (time.time() - now)))

node.send_output("latency", pa.array([], type=pa.uint64()))
\end{lstlisting}

% Source (released baseline): node_2.py
\begin{lstlisting}[style=retriever, language=Python, caption={Dora subscriber (native Python API; excerpt).}]
#!/usr/bin/env python
# -*- coding: utf-8 -*-

import time

import pyarrow as pa
from dora import Node
from helper import record_results

pa.array([])  # pyarrow warm-up
node = Node()

current_size = 8
n = 0
i = 0
latencies = []

NAME = "dora Node"

while True:
    event = node.next()
    if event["type"] == "INPUT":
        data = event["value"]
    else:
        break
    
    if len(data) == 0:
        break

    t_received = time.perf_counter_ns()
    length = len(data) * 8
    if length != current_size:
        if n > 0:
            record_results(NAME, current_size, latencies)
        current_size = length
        n = 0
        start = time.perf_counter_ns()
        latencies = []

    t_send = data[0].as_py()
    latencies.append((t_received - t_send) / 1000)

    n += 1
    i += 1

record_results(NAME, current_size, latencies)
\end{lstlisting}

\textbf{ROS2 benchmark scripts (Python)}

% Source (released baseline): publisher_member_function.py (derived from ROS2 minimal pub/sub examples)
\begin{lstlisting}[style=retriever, language=Python, caption={ROS2 publisher (Python; excerpt).}]
# Copyright 2016 Open Source Robotics Foundation, Inc.
#
# Licensed under the Apache License, Version 2.0 (the "License");
# you may not use this file except in compliance with the License.
# You may obtain a copy of the License at
#
#     http://www.apache.org/licenses/LICENSE-2.0
#
# Unless required by applicable law or agreed to in writing, software
# distributed under the License is distributed on an "AS IS" BASIS,
# WITHOUT WARRANTIES OR CONDITIONS OF ANY KIND, either express or implied.
# See the License for the specific language governing permissions and
# limitations under the License.

import time

import numpy as np
import rclpy
from rclpy.node import Node
from std_msgs.msg import UInt64MultiArray

SIZES = [
    8,
    64,
    512,
    10 * 512,
    100 * 512,
    1000 * 512,
    10000 * 512,
    8,
]


class MinimalPublisher(Node):
    def __init__(self):
        super().__init__("minimal_publisher")
        self.publisher_ = self.create_publisher(UInt64MultiArray, "topic", 10)
        timer_period = 0.05  # seconds
        self.timer = self.create_timer(timer_period, self.timer_callback)
        self.i = 0
        self.j = 0

    def timer_callback(self):
        msg = UInt64MultiArray()
        random_data = np.array(
            np.random.randint(255, size=SIZES[self.i], dtype=np.uint64)
        )

        random_data[0] = np.array([time.perf_counter_ns()])

        random_data = random_data.tobytes()
        msg.data.frombytes(random_data)
        self.publisher_.publish(msg)
        if self.j == 100:
            self.i += 1
            self.j = 0
        else:
            self.j += 1


def main(args=None):
    rclpy.init(args=args)

    minimal_publisher = MinimalPublisher()
    rclpy.spin(minimal_publisher)

    minimal_publisher.destroy_node()
    rclpy.shutdown()


if __name__ == "__main__":
    main()
\end{lstlisting}

% Source (released baseline): subscriber_member_function.py (derived from ROS2 minimal pub/sub examples)
\begin{lstlisting}[style=retriever, language=Python, caption={ROS2 subscriber (Python; excerpt).}]
# Copyright 2016 Open Source Robotics Foundation, Inc.
#
# Licensed under the Apache License, Version 2.0 (the "License");
# you may not use this file except in compliance with the License.
# You may obtain a copy of the License at
#
#     http://www.apache.org/licenses/LICENSE-2.0
#
# Unless required by applicable law or agreed to in writing, software
# distributed under the License is distributed on an "AS IS" BASIS,
# WITHOUT WARRANTIES OR CONDITIONS OF ANY KIND, either express or implied.
# See the License for the specific language governing permissions and
# limitations under the License.

import csv
import os
import time

import numpy as np
import rclpy
from rclpy.node import Node
from std_msgs.msg import UInt64MultiArray

LATENCY = True

NAME = os.getenv("NAME") or "ROS 2"
PLATFORM = "COMPUTER_PERF"
current_size = 8
n = 0
latencies = []
save_x = []


def record_results(start, current_size, latencies, latency: bool):
    avg_latency = np.array(latencies).mean()

    csv_file = os.getenv("CSV_TIME_FILE", "time.csv")
    append = os.path.isfile(csv_file)
    log_header = ["name", "platform", "size", "latency"]
    log_row = [NAME, PLATFORM, current_size, avg_latency]
    if append:
        with open(csv_file, "a", encoding="utf-8") as f:
            w = csv.writer(f, lineterminator="\n")
            w.writerow(log_row)
    else:
        with open(csv_file, "w+", encoding="utf-8") as f:
            w = csv.writer(f, lineterminator="\n")
            w.writerow(log_header)
            w.writerow(log_row)


class MinimalSubscriber(Node):
    def __init__(self):
        super().__init__("minimal_subscriber")
        self.subscription = self.create_subscription(
            UInt64MultiArray, "topic", self.listener_callback, 10
        )
        self.subscription  # prevent unused variable warning
        self.current_size = 0
        self.latencies = []
        self.n = 0

    def listener_callback(self, msg: UInt64MultiArray):
        t_received = time.perf_counter_ns()
        length = len(msg.data) * 8  # As it is Uint64
        if length != self.current_size:
            if self.n > 0:
                record_results([], self.current_size, self.latencies, LATENCY)
            self.current_size = length
            self.n = 0
            self.latencies = []
        t_send = msg.data[0]
        self.latencies.append((t_received - t_send) / 1000)
        self.n += 1


def main(args=None):
    rclpy.init(args=args)

    minimal_subscriber = MinimalSubscriber()
    rclpy.spin(minimal_subscriber)

    minimal_subscriber.destroy_node()
    rclpy.shutdown()


if __name__ == "__main__":
    main()
\end{lstlisting}

\end{adjustwidth}

\end{document}